\definecolor{fbpurple3}{HTML}{f0ebf5}
\definecolor{fbteal2}{HTML}{199696}
\definecolor{fborange2}{HTML}{f06919}
\definecolor{fbApp}{HTML}{dee3e9}
\definecolor{fborange3}{HTML}{ffefe1}
\definecolor{fbApp}{HTML}{c8e7fa}
\newcommand{\norm}[1]{\left\lVert#1\right\rVert}
\newcommand{\defeq}{\coloneqq}
\newtheorem{proposition}{Proposition}
\newtheorem{assumption}{Assumption}
\def\eqref#1{equation~\ref{#1}}
\def\1{\bm{1}}
\DeclareMathAlphabet{\mathsfit}{\encodingdefault}{\sfdefault}{m}{sl}
\SetMathAlphabet{\mathsfit}{bold}{\encodingdefault}{\sfdefault}{bx}{n}
\newcommand{\R}{\mathbb{R}}
\title{Masked Siamese Networks\\ for Label-Efficient Learning}
\author{\bf Mahmoud Assran$^{1,2,5}$\thanks{correspondence to massran@fb.com}\quad Mathilde Caron$^{1,3}$\quad Ishan Misra$^{1}$\quad Piotr Bojanowski$^{1}$ \\ 
\bf Florian Bordes$^{1,2,4}$\quad Pascal Vincent$^{1,2}$\quad Armand Joulin$^{1}$\quad Michael Rabbat$^{1,2}$\quad
\bf Nicolas Ballas$^{1}$\\[2mm]
$^{1}$Facebook AI Research\\
$^{2}$Mila -- Quebec AI Institute \\
$^{3}$Inria, Univ.~Grenoble Alpes \\
$^{4}$Universite de Montreal, DIRO \\
$^{5}$McGill University, Dept. of Electrical and Computer Engineering
}
\begin{document}

\maketitle

\begin{abstract}
We propose Masked Siamese Networks (MSN), a self-supervised learning framework for learning image representations.
Our approach matches the representation of an image view containing randomly masked patches to the representation of the original unmasked image.
This self-supervised pre-training strategy is particularly scalable when applied to Vision Transformers since only the unmasked patches are processed by the network.
As a result, MSNs improve the scalability of joint-embedding architectures, while producing representations of a high semantic level that perform competitively on low-shot image classification.
For instance, on ImageNet-1K, with only 5,000 annotated images, our base MSN model achieves 72.4\% top-1 accuracy, and with 1\% of ImageNet-1K labels, we achieve 75.7\% top-1 accuracy, setting a new state-of-the-art for self-supervised learning on this benchmark. Our code is publicly available at \href{https://github.com/facebookresearch/msn}{https://github.com/facebookresearch/msn}.
\end{abstract}

\section{Introduction}
Self-Supervised Learning (SSL) has emerged as an effective strategy for unsupervised learning of image representations, eliminating the need to manually annotate vast quantities of data.
By training large models on unlabeled data, SSL aims to learn representations that can be effectively applied to a downstream prediction task with few labels~\citep{chen2020exploring}.

One of the core ideas of SSL is to remove a portion of the input and learn to predict the removed content~\citep{pathak2016context}.
Auto-regressive models and denoising auto-encoders instantiate this principle in vision by predicting the missing parts at the pixel or token level~\citep{chen2020generative,vincent2010stacked,he2021masked,bao2021beit,baevski2022data2vec}.
Masked auto-encoders in particular, which learn representations by reconstructing randomly masked patches from an input, have been successfully applied in vision~\citep{he2021masked,xie2021simmim,wei2021masked,bao2021beit}.
However, optimizing a reconstruction loss requires modelling low-level image details that are not necessary for classification tasks involving semantic abstraction. 
Thus, the resulting representations often need to be fine-tuned for semantic recognition tasks which can lead to overfitting in low-shot settings.
Nevertheless, masked auto-encoders have enabled the training of large-scale models and demonstrated state-of-the-art performance when fine-tuning on large labeled datasets, with millions of labels~\citep{bao2021beit,he2021masked,xie2021simmim,baevski2022data2vec}.

Joint-embedding architectures, on the other hand, avoid reconstruction. Approaches such as Siamese Networks~\citep{he2019moco,caron2020unsupervised,chen2020exploring,grill2020bootstrap,caron2021emerging,zbontar2021barlow,bardes2021vicreg} learn a representation by training an encoder network to produce similar embeddings for two different views of the same image~\citep{bromley1993signature,exemplarConvNet2014}.
Here the views are typically constructed by applying different image transforms --- such as random scaling, cropping, and color jitter --- to the input~\citep{wu2018unsupervised,misra2020self}.
The inductive bias introduced by this invariance-based pre-training typically produces strong off-the-shelf representations of a high semantic level~\citep{caron2021emerging} but often disregards rich local structure that can be helpful to model.

\begin{figure}[t]
    \centering
    \begin{subfigure}[t]{0.48\linewidth}
        \includegraphics[width=\linewidth]{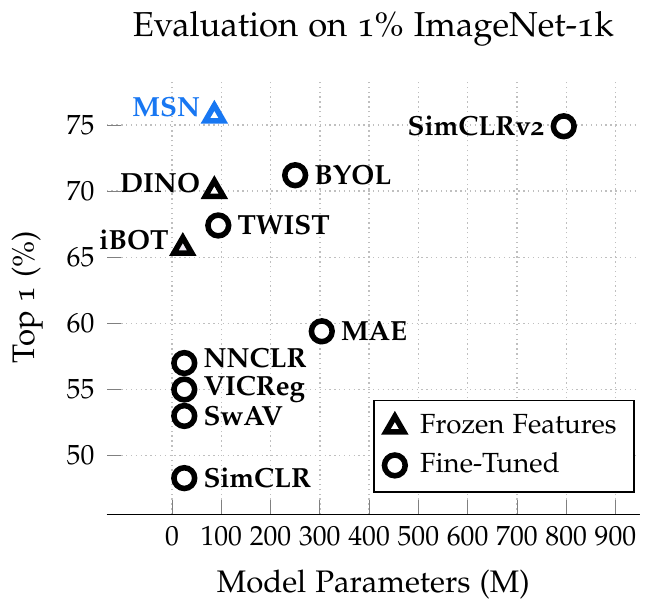}
        \caption{Evaluation using 1\% of ImageNet-1K labels ($\sim$13 imgs/class). Evaluation with {\it Frozen Features} corresponds to freezing the weights and training a logistic regression classifier with the available labeled samples. Evaluation with {\it Fine-Tuning} corresponds to adding a linear head and fine-tuning the model+head, end-to-end.}
        \label{fig:1percent}
    \end{subfigure}\hfill
    \begin{subfigure}[t]{0.48\linewidth}
        \includegraphics[width=\linewidth]{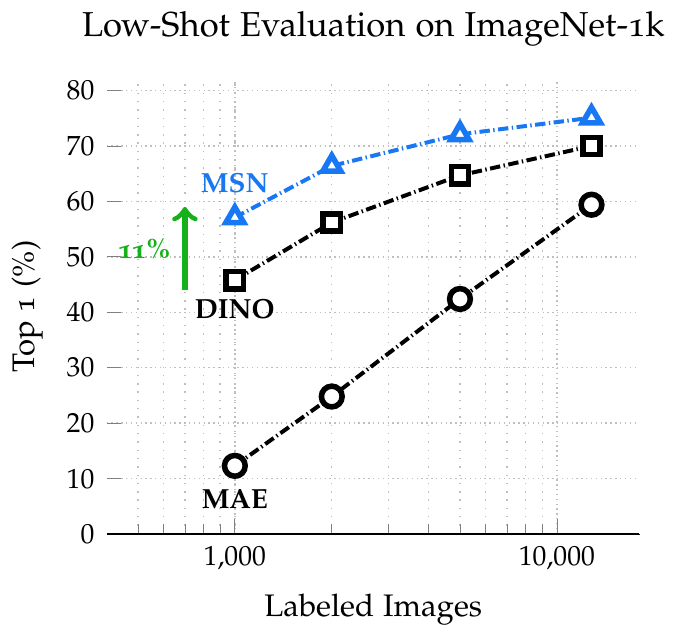}
        \caption{Low-shot evaluation comparing MSN (ViT-L/7) to the best publicly available models in low-shot classification for DINO (ViT-B/8) and MAE (ViT-L/16). MSN and DINO use a linear probe, whereas MAE uses partial fine-tuning, where the last block of the pre-trained model along with a linear head are adapted.}
        \label{fig:lowshot_sweep}
    \end{subfigure}
    \caption{{\bf Low-shot Evaluation of self-supervised models, pre-trained on ImageNet-1K.} (Left) MSN surpasses the previous 800M parameter state-of-the-art, while using a model that is $10\times$ smaller. (Right) MSN achieves good classification performance using less labels than current mask-based auto-encoders.}
    \label{fig:lowshot}
\end{figure}

In this work, we propose Masked Siamese Networks (MSNs), a self-supervised learning framework that leverages the idea of mask-denoising while avoiding pixel and token-level reconstruction.
Figure~\ref{fig:msn} shows a schematic of the method.
Given two views of an image, MSN randomly masks patches from one view while leaving the other view unchanged.
The objective is to train a neural network encoder, parametrized with a vision transformer (ViT)~\citep{dosovitskiy2020image}, to output similar embeddings for the two views.
In this procedure, MSN does not predict the masked patches at the input level, but rather performs the denoising step implicitly at the representation level by ensuring that the representation of the masked input matches the representation of the unmasked one.
Figure~\ref{fig:msn_sampling} qualitatively demonstrates the effectiveness of the MSN denoising process. %Since there is no decoder in the MSN framework, we visualize the image representations (output corresponding to {\tt[CLS]} token) by following the RCDM framework of~\citet{bordes2021high} (cf.~Appendix~\ref{apndx:qualitative}).
\begin{figure}[h]
    \centering
    \includegraphics[width=0.95\linewidth]{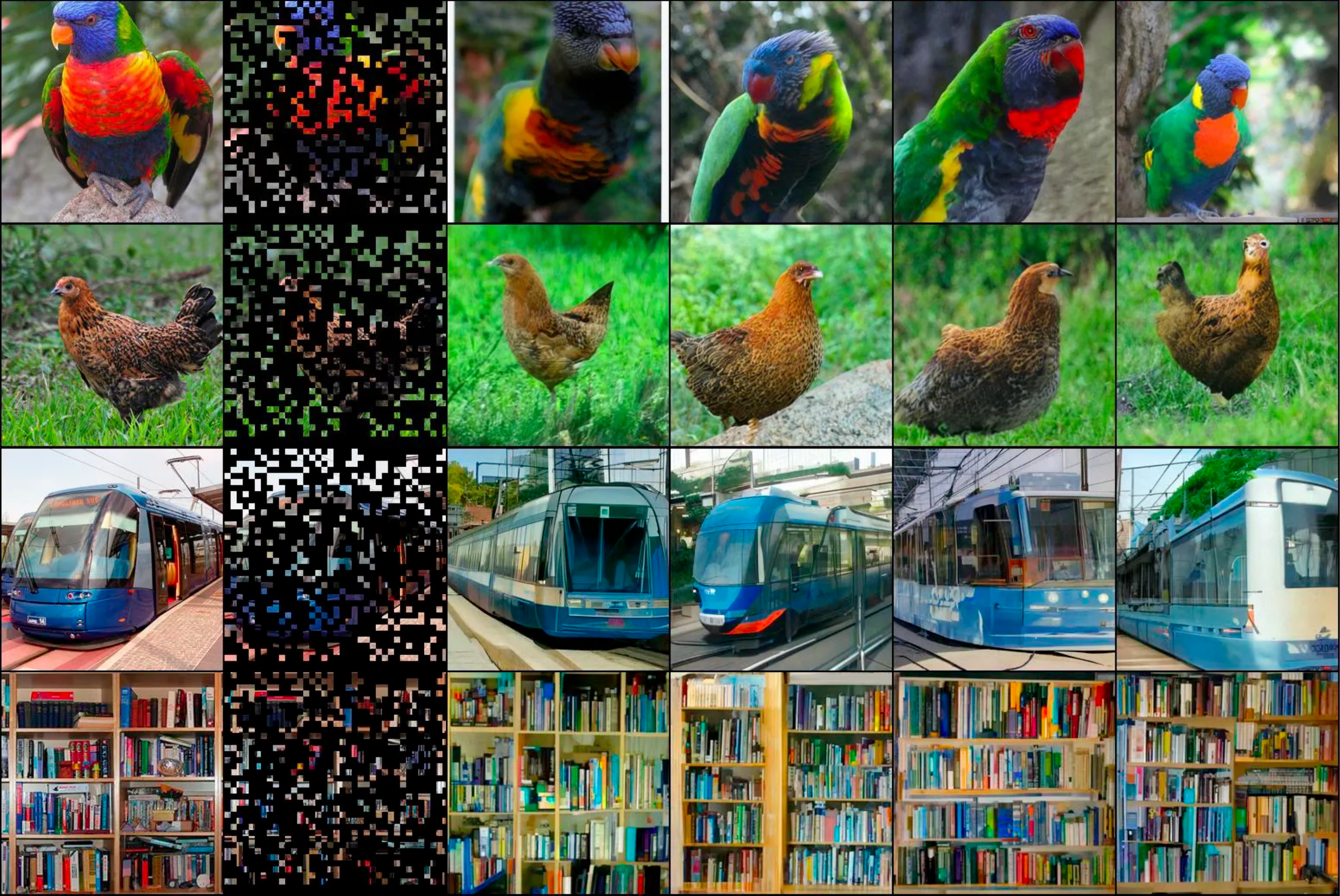}
    \caption{{\bf Visualization of MSN representations.} First column: original image. Second column: image with 70\% of patches masked, input to an MSN pre-trained ViT-L/7 encoder to compute representations. Other columns: Samples of a generative model conditioned on the MSN representations (see~Appendix~\ref{apndx:qualitative} for more details and other samples). Qualities that vary across samples represent information that the pre-trained representation is invariant to; e.g., in this case, MSN discards background, pose, and lighting information. Qualities that are common across samples represent information that the pre-trained representation is not invariant to. In this case, even with a large fraction of the patches corrupted with mask noise, MSN representations still encode semantic information about the object of interest.}
    \label{fig:msn_sampling}
\end{figure}

Empirically, we demonstrate that MSNs learn strong off-the-shelf representations that excel at low-shot prediction (cf.~Figure~\ref{fig:lowshot}).
In particular, MSN achieves good classification performance using $100\times$ fewer labels than current mask-based auto-encoders~\citep{he2021masked,xie2019unsupervised}.
In the standard 1\% ImageNet low-shot classification task, an MSN-trained ViT-B/4 (using a patch size of $4$x$4$ pixels) achieves 75.7\% top-1 accuracy, outperforming the previous 800M parameter state-of-the-art convolutional network~\citep{chen2020big} while using nearly $10\times$ fewer parameters (cf.~Figure~\ref{fig:1percent}).

Since a good representation should not need many examples to learn about a concept~\citep{goyal2019scaling}, we also consider a more challenging evaluation benchmark for label-efficient low-shot classification~\citep{sohn2020fixmatch,lucas2021barely}, using from 1 labeled image per class up to 5 images per class (cf.~Table~\ref{tb:lowshot_imagenet}).
MSN also achieves state-of-the-art performance in that regime.
For instance, with only 5 labeled images per class, we can pre-train a ViT-L/7 with MSN on ImageNet-1K to achieve 72.1\% top-1 accuracy surpassing the previous state-of-the-art method, DINO~\citep{caron2021emerging}, by 8\% top-1. 

Similar to masked auto-encoders, MSNs also exhibit good computational scaling since only the unmasked patches are processed by the ViT encoder.
For example, by randomly masking 70\% of the patches, MSN uses half the computation and memory compared to an unmasked joint-embedding baseline. In practice, we pre-train a ViT-L/7 on as few as 18 AWS {\tt p4d-24xlarge} machines. Without masking, the same job requires over 42 machines.

Finally, we also show that MSNs are competitive with prior works on other self-supervised benchmarks that use many labels for evaluation (e.g., fine-tuning, linear-evaluation, transfer learning).

\section{Prerequisites}

\paragraph{Problem Formulation}
Consider a large collection of unlabeled images, $\mathcal{D}=(\mathbf{x}_i)_{i=1}^U$, and a small dataset of annotated images, $\mathcal{S}=({\mathbf{x}_{s}}_i, y_i)_{i=1}^L$, with $L \ll U$.
Here, the images in $\mathcal{S}$ may overlap with the images in the dataset $\mathcal{D}$.
Our goal is to learn image representations by first pre-training on $\mathcal{D}$ and then adapting the representation to the supervised task using $\mathcal{S}$.

\paragraph{Siamese Networks}
The goal of siamese networks~\citep{becker1992self,bromley1993signature}, as they are used in self-supervised learning, is to learn an encoder that produces similar image embeddings for two views of an image.
Specifically, given an encoder $f_\theta(\cdot)$ and two views $\mathbf{x_i}$ and $\mathbf{x^+_i}$ of an image, the encoder independently processes each view and outputs representations $z_i$ and $z^+_i$ respectively, referred to as the anchor representation and the target representation.
The objective of siamese networks is to learn an encoder that is not sensitive to differences between views, so the representations $z_i$ and $z^+_i$ should match. %produce an anchor representation that matches the target representation.
In practice, the encoder $f_\theta(\cdot)$ is usually parameterized as a deep neural network with learnable parameters $\theta$.

The main challenge with siamese architectures is to prevent representation collapse in which the encoder produces a constant image embedding regardless of the input.
Several approaches have been investigated in the literature. Contrastive losses explicitly push away embeddings of different images~\citep{bromley1993signature,he2019moco,chen2020exploring}. Information maximization approaches try to maximize the entropy of the average prediction~\citep{caron2021emerging,assran2021semi} or spread out the embeddings uniformly on the surface of a sphere~\citep{caron2020unsupervised}.
Asymmetric approaches rely on an asymmetric architectural choice such as stop-gradient operations and a momentum encoder~\citep{chen2020exploring,grill2020bootstrap} to prevent collapse.
Other approaches try to decorrelate the vector components of the embeddings to minimize redundancy across samples~\citep{zbontar2021barlow,bardes2021vicreg}.

\paragraph{Vision Transformer}
We use a standard Vision Transformer (ViT) architecture~\citep{dosovitskiy2020image} as the encoder.
Vision Transformers first extract a sequence of non-overlapping patches of resolution $N \times N$ from an image.
Next, they apply a linear layer to extract patch tokens, and subsequently add learnable positional embeddings to them.
An extra learnable {\tt [CLS]} token is added to the sequence.
This token aims to aggregate information from the full sequence of patches~\citep{dosovitskiy2020image,caron2021emerging}.
The sequence of tokens is then fed to a stack of Transformer layers~\citep{vaswani2017attention}.
A Transformer layer is composed of a self-attention~\citep{vaswani2017attention} and a fully-connected layer with skip connections~\citep{he2016deep}.
Self-attention uses an attention mechanism~\citep{bahdanau2014neural} applied to the entire sequence of elements to update the representation.
The output representation associated to the {\tt [CLS]} token is used as the output of the encoder.

\section{Masked Siamese Networks}
\label{sec:methodology}

\begin{figure}[t]
    \centering
    \includegraphics[width=0.95\linewidth]{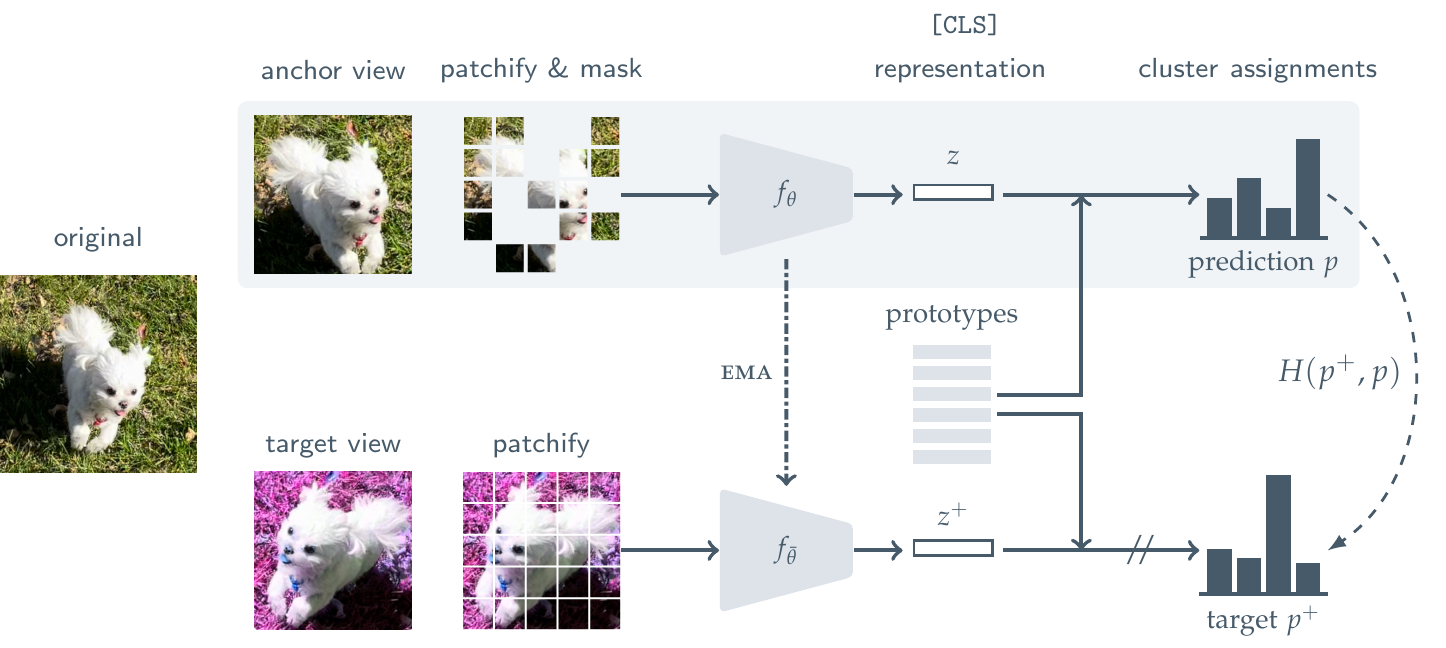}
    \caption{{\bf Masked Siamese Networks.} First use random data augmentations to generate two views of an image, referred to as the anchor view and the target view. Subsequently, a random mask is applied to the anchor view, while the target view is left unchanged. The objective is then to assign the representation of the masked anchor view to the same clusters as the representation of the unmasked target view. A standard cross-entropy loss is used as the criterion to optimize.}
    \label{fig:msn}
\end{figure}

We now describe the proposed Masked Siamese Network (MSN) training procedure, which combines invariance-based pre-training with mask denoising; see Figure~\ref{fig:msn} for a schematic.
MSNs first use random data augmentations to generate two views of an image, referred to as the anchor view and the target view.
Subsequently, a random mask is applied to the anchor view, while the target view is left unchanged.
Similar to clustering-based SSL approaches~\citep{caron2020unsupervised,caron2021emerging,assran2021semi}, learning occurs by computing a soft-distribution over a set of prototypes for both the anchor and target views. The objective is then to assign the representation of the masked anchor view to the same prototypes as the representation of the unmasked target view. We use a standard cross-entropy loss to optimize this criterion.

In contrast to previous work on masked image modelling, the mask-denoising process in MSN is discriminative, rather than generative~\citep{he2021masked,xie2021simmim,wei2021masked,bao2021beit,zhou2021ibot}.
MSN architectures do not directly predict pixel values (or tokens) for the masked patches.
Instead, the loss is applied directly to the output corresponding to the {\tt [CLS]} token of the encoder.

%\subsection{Detailed Methodology}
\paragraph{Input Views}
In each iteration of pre-training, we sample a mini-batch of $B \geq 1$ images.
For an index $i \in [B]$, let ${\bf x}_i$ denote the $i^{\text{th}}$ image in the mini-batch.
For each image ${\bf x}_i$, we first apply a random set of data augmentations to generate a target view, denoted ${\bf x}^+_i$, and $M \geq 1$ anchor views, denoted ${\bf x}_{i,1}, {\bf x}_{i,2}, \ldots, {\bf x}_{i,M}$.

\paragraph{Patchify and Mask}
Next, we ``patchify'' each view by converting it into a sequence of non-overlapping $N \times N$ patches. After patchifying the anchor view ${\bf x}_{i,m}$, we also apply the additional step of masking by randomly dropping some of the patches. We denote by ${\bf \hat{x}}_{i,m}$ the sequence of masked anchor patches, and by ${\bf \hat{x}}^+_i$ the sequence of unmasked target patches. Because of masking, the anchor sequence ${\bf\hat{x}}_{i,m}$ can have a different length than the patchified target sequence ${\bf\hat{x}}^+_i$, even if both image views originally have the same resolution.
\begin{figure}[t]
    \centering
    \begin{subfigure}{0.225\linewidth}
        \centering
        \includegraphics[width=0.9\linewidth]{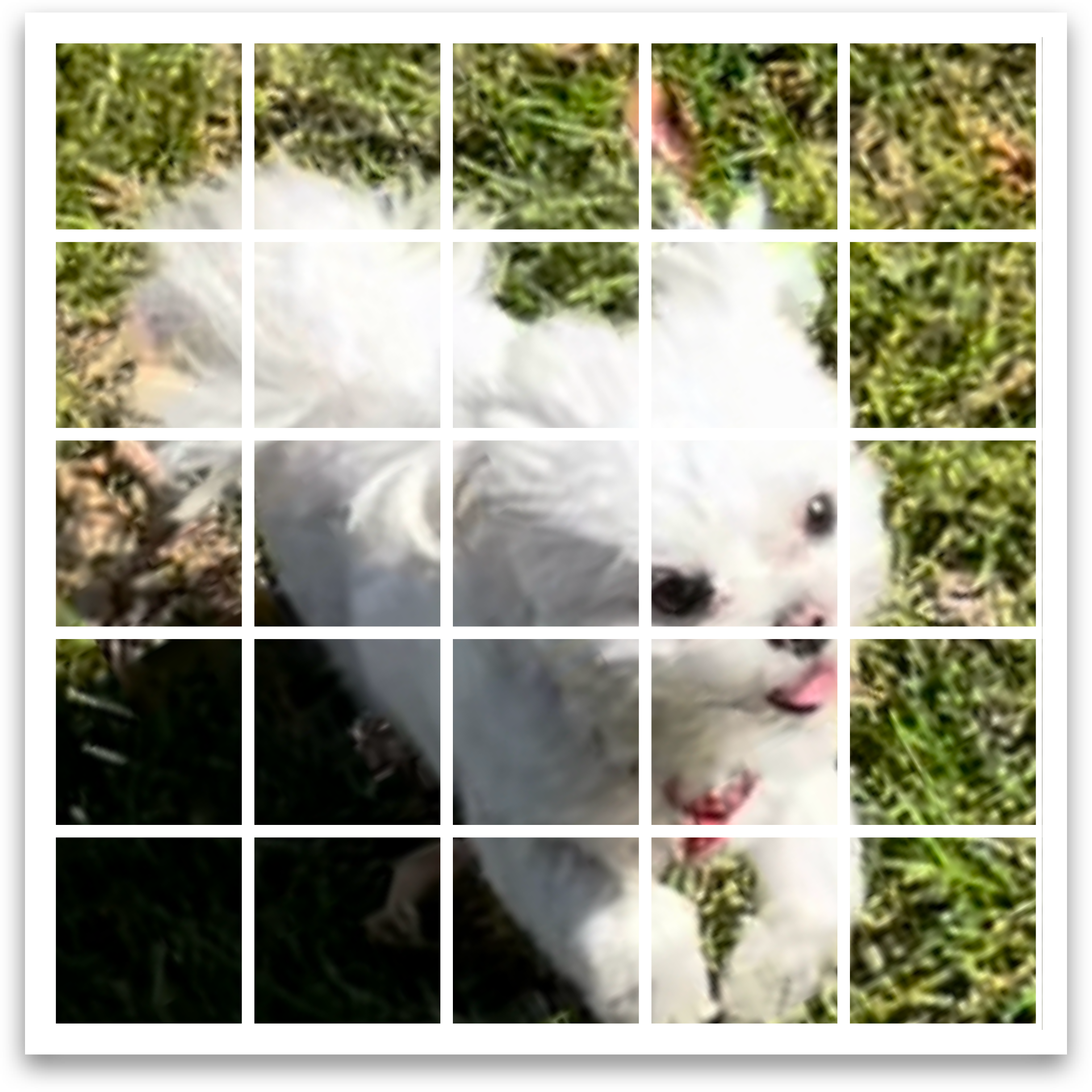}
        \caption{\scriptsize No Mask}
    \end{subfigure}
    \begin{subfigure}{0.225\linewidth}
        \centering
        \includegraphics[width=0.9\linewidth]{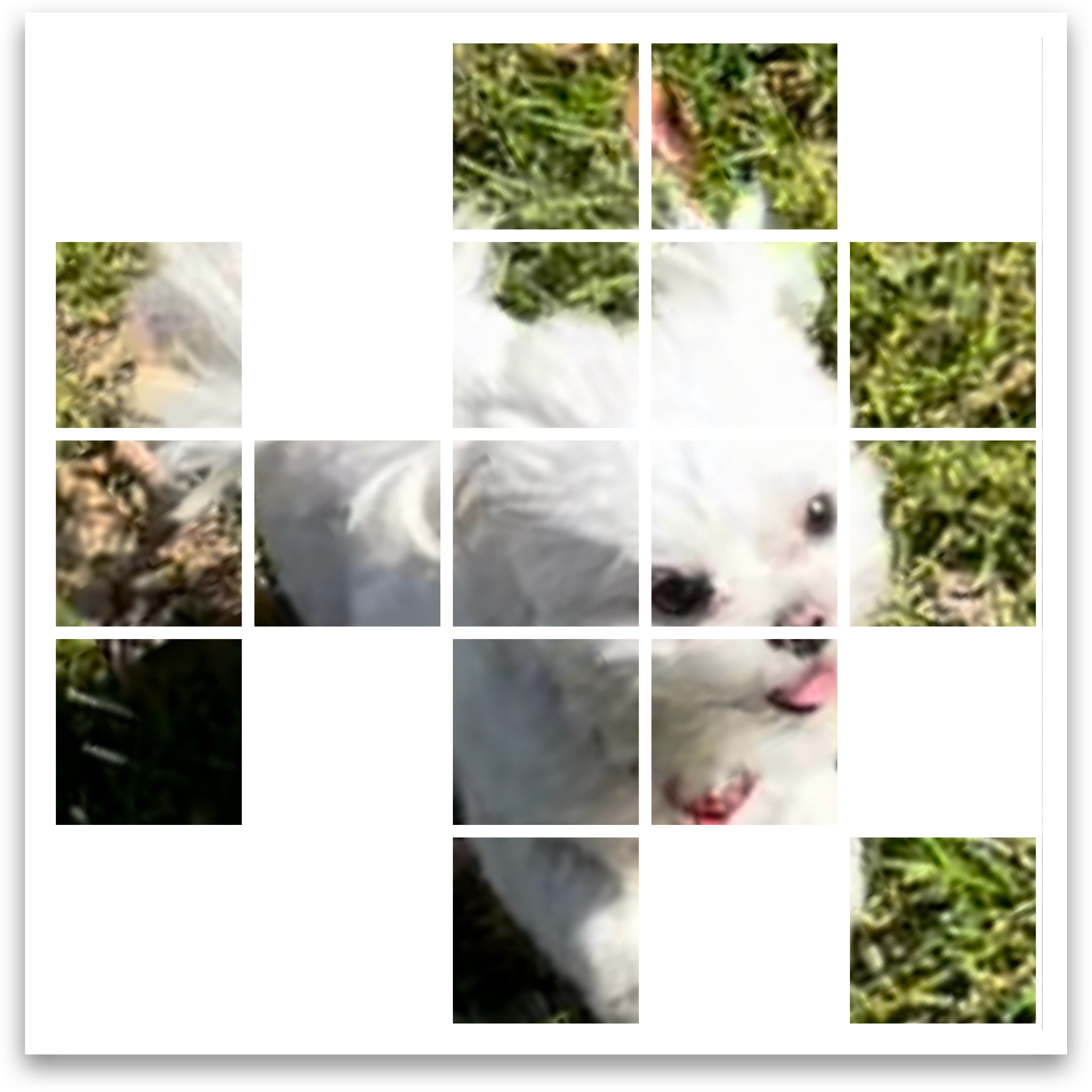}
        \caption{\scriptsize Random Mask}
    \end{subfigure}
    \begin{subfigure}{0.225\linewidth}
        \centering
        \includegraphics[width=0.9\linewidth]{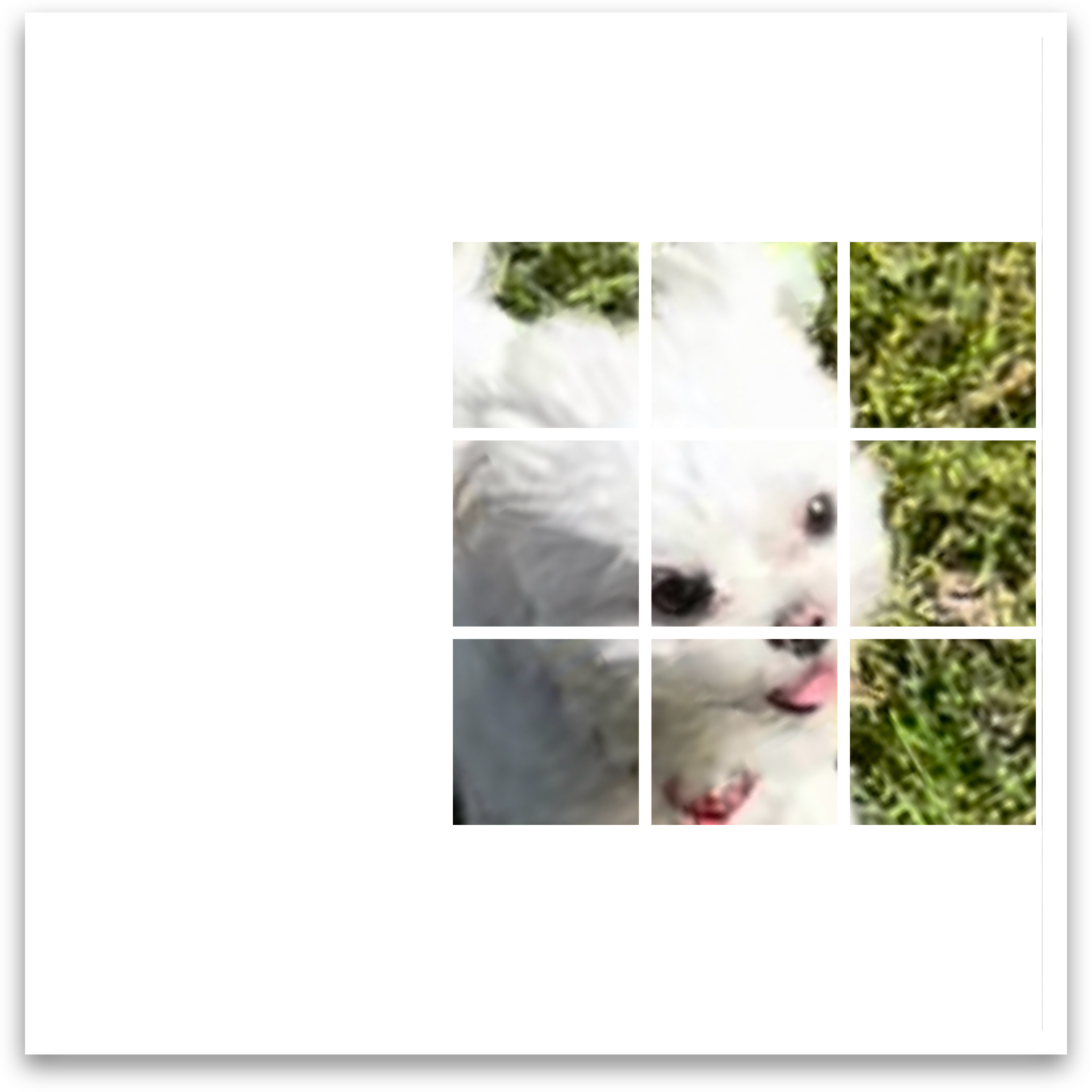}
        \caption{\scriptsize Focal Mask}
    \end{subfigure}
    \caption{{\bf Masking strategies.} When applying a Random Mask, we randomly drop patches across a global view of the image. When applying a Focal Mask, we randomly select a local continuous block of an image, and mask everything around it. We typically leverage both Random and Focal Masking strategies when pre-training with MSNs.}
    \label{fig:masking_strategies}
\end{figure}

We investigate two strategies for masking the anchor views, Random Masking and Focal Masking, which are depicted in Figure~\ref{fig:masking_strategies}.
When applying Random Masking, we randomly drop potentially non-contiguous patches across the sequence. 
Conversely, when applying Focal Masking, we randomly select a local continuous block of the anchor view and drop all the patches around it.

\paragraph{Encoder}
Given a parameterized anchor encoder, denoted $f_\theta(\cdot)$, let $z_{i,m} \in \R^{d}$ denote the representation computed from the patchified (and masked) anchor view ${\bf \hat{x}}_{i,m}$.
Similarly, given a parameterized target encoder $f_{\bar{\theta}}(\cdot)$, with a potentially different set of parameters $\bar{\theta}$, let $z^+_i \in \R^{d}$ denote the representation computed from the patchified target view ${\bf \hat{x}}^+_i$.
In MSNs, the parameters $\bar{\theta}$ of the target encoder are updated via an exponential moving average of the anchor encoder parameters~\citep{grill2020bootstrap}.
Both encoders correspond to the trunk of a ViT~\citep{dosovitskiy2020image}.
We take the output of the network to be the representation corresponding to the {\tt [CLS]} token.%\footnote{Note that when computing anchor representations with a ViT, the patches utilize the positional embeddings corresponding to their original location in the unmasked image.}

\paragraph{Similarity Metric and Predictions}
Let ${\bf q} \in \R^{K \times d}$ denote $K > 1$ learnable prototypes, each of dimension $d$.
To train the encoder, we compute a distribution based on the similarity between these prototypes and each anchor and target view pair, and we penalize the encoder for differences between these distributions.
% Our numerical objective in this work is to compute a soft-distribution over these prototypes for each anchor-target view pair and encourage them to match.
More precisely, for an anchor representation $z_{i,m}$, we compute a ``prediction'' $p_{i,m} \in \Delta_K$ in the $K$-dimensional simplex by measuring the cosine similarity to the prototypes matrix ${\bf q}$.
For $L_2$-normalized representations and prototypes, the predictions $p_{i,m}$ can be concisely written as
\[
    p_{i,m} \defeq \text{softmax}\left( \frac{z_{i,m} \cdot {\bf q}}{\tau} \right),
\]
where $\tau \in (0, 1)$ is a temperature.
Similarly, for each target representation $z^+_i$, we generate a prediction $p^+_i \in \Delta_K$ by measuring the cosine similarity to the same prototypes matrix ${\bf q}$.
When computing the target predictions, we also use a temperature parameter $\tau^+  \in (0, 1)$.
Note, we always choose $\tau^+ < \tau$ to encourage sharper target predictions, which implicitly guides the model to produce confident low entropy anchor predictions.
As shown in Appendix~\ref{apndx:theory}, target sharpening coupled with with other regularization like mean-entropy maximization (see below) is provably sufficient to eliminate collapsing solutions in the MSN framework.
Empirically, we have observed that training without sharpening can result in collapsing solutions.

\paragraph{Training Objective} 
As previously mentioned, to train the encoder, we penalize when the anchor prediction $p_{i,m}$ is different from the target prediction $p_i^+$. 
We enforce this criterion using a standard cross-entropy loss $H(p^+_i, p_{i,m})$.

We also incorporate the mean entropy maximization ({\sc me-max}) regularizer, also used in~\citep{assran2021semi,joulin2012convex}, to encourage the model to utilize the full set of prototypes.
Denote the average prediction across all the anchor views by
\[ 
    \overline{p} \defeq \frac{1}{MB}\sum^B_{i=1}\sum^M_{m=1} p_{i,m}.
\]
The {\sc me-max} regularizer simply seeks to maximize the entropy of $\overline{p}$, denoted $H(\overline{p})$, or equivalently, minimize the negative entropy of $\overline{p}$.
Thus, the overall objective to be minimized when training the encoder parameters $\theta$ and prototypes $\mathbf{q}$ is
\begin{equation}
    \label{eq:objective}
    \frac{1}{MB} \sum^B_{i=1}\sum^M_{m=1} H(p^+_i, p_{i,m}) - \lambda H(\overline{p}),
\end{equation}
where $\lambda > 0$ controls the weight of the  {\sc me-max} regularization.
Note that when training, we only compute gradients with respect to the anchor predictions $p_{i,m}$, not the target predictions $p^+_i$.

\section{Related Work}
\label{sec:related-work}

Unsupervised pre-training for vision has seen rapid progress  with the development of view-invariant representation learning and joint embedding architectures%, which can be trained using either contrastive or non-contrastive loss functions
~\citep{wu2018unsupervised,he2019moco,chen2020exploring,grill2020bootstrap,caron2021emerging,bardes2021vicreg}.
Most similar to our approach is DINO~\citep{caron2021emerging} which leverages a Siamese Network with a cross-entropy loss and a momentum encoder.
DINO also uses multi-crop training, which is a form of focal masking, but it requires an unmasked anchor view during training.
MSN can be seen as a generalization of DINO, leveraging both random and focal masking without requiring any unmasked anchor views. Since the cross-entropy loss in equation~\eqref{eq:objective} is only differentiated with respect to the anchor predictions, not the target, MSN only backpropagates through the anchor network and only needs to store the activation associated with the masked view. MSN therefore reduces the computational and memory requirements.
%Hence, the memory overhead for processing an unmasked target view is quite small.
MSN also differs from DINO in its mechanism for preventing representation collapse (entropy maximization as opposed to centering and sharpening).
Our empirical results show that MSN compares favourably to DINO across various degrees of supervision for the downstream task.

A prominent line of work in SSL is to remove a portion of the input and learn to reconstruct the removed content~\citep{devlin2018bert}.
For example, in the field of image recognition, some works have proposed to predict augmented image channels~\citep{zhang2017split}, which can be regarded as a form of image colorization~\citep{zhang2016colorful,larsson2016learning,larsson2017colorization}.
Other approaches propose to remove and learn to regress entire image regions: the seminal Context Encoders of~\citet{pathak2016context} train a network to generate missing image patches based on their surroundings.
Recent works revisit this idea and investigate the pre-training of ViTs with masked auto-encoders~\citep{chen2020generative,he2021masked,xie2021simmim,wei2021masked,bao2021beit}.
These approaches corrupt images with mask-noise and predict missing input values at the pixel level~\citep{dosovitskiy2020image,he2021masked,xie2019unsupervised} or using a tokenizer~\citep{bao2021beit,wei2021masked}.
Our approach does not predict the missing value at the input level, but instead performs the denoising step implicitly by ensuring that the global representation of the noisy input matches that of the uncorrupted input.

Some recent approaches have started to explore the combination of joint-embedding architectures and denoising pre-training tasks~\citep{el2021large,baevski2022data2vec,zhou2021ibot}.
Those approaches mask an image by replacing the masked patches with a learnable mask token, and output a single vector for each masked patch.
The objective is then to directly match each computed patch vector to the equivalent patch token extracted from a target encoder.
In addition to the patch-level loss, iBOT~\citep{zhou2021ibot} and SplitMask~\citep{el2021large}  apply a joint-embedding loss to an output representing the global sequence (either the {\tt [CLS]} token or a global average pool of the patch vectors). SplitMask shows that by using a patch-level loss, you can reduce the amount of unlabeled pre-training data. In contrast, we focus on reducing the amount of labeled data available for the downstream prediction task.
Data2Vec~\citep{baevski2022data2vec} demonstrates that this approach is suitable for multiple modalities such as vision, speech and text.
Different from these approaches, we only match the view representations globally and do not consider a patch level loss.
Consequently, we can completely ignore the masked patches, significantly reducing the computational and memory requirements.
For example, when training our largest model, a ViT-L/7, we mask over 70\% of the input patches, and reduce memory and computational overhead by  half.

\section{Results}
\label{sec:results}

We evaluate MSN representations learned on the ImageNet-1K dataset~\citep{russakovsky2015imagenet}.
%Since one of SSL goals is to learn representation that can be applied to tasks with few labels, 
We first consider low-shot evaluation on ImageNet-1K using as few as 1--5 images per class.
We also compare with the state-of-the-art in settings where more supervision is available and investigate transfer-learning performance. Finally, we conduct ablation experiments with MSN. By default, we pre-train with a batch-size of 1024 images, generating several anchor views from each image: 1 view with a random mask, and 10 views with focal masks. We find that the optimal masking ratio is model-dependent, with larger models benefiting from more aggressive patch dropping. We describe MSN implementation details in Appendix~\ref{apndx:implementation}.
\begin{table}[t]
    \centering
    \caption{\textbf{Extreme low-shot.} We evaluate the label-efficiency of self-supervised models pretrained on the ImageNet-1K dataset. For evaluation, we use an extremely small number of the ImageNet-1K labels and report the mean top-1 accuracy and standard deviation across 3 random splits of the data.}
    \label{tb:lowshot}
    \begin{tabular}{l l l c c c}
        % & & & \multicolumn{3}{c}{\bf\small Top 1}\\[1.5mm]
        & & & \multicolumn{3}{c}{\small Images per Class}\\
        \bf\small Method & \bf\small Architecture & \bf\small Epochs & 1 & 2 & 5 \\\toprule
        \multirow{2}{*}{iBOT~\citep{zhou2021ibot}} & ViT-S/16 & 800 & 40.4 $\pm$ 0.5 & 50.8 $\pm$ 0.8 & 59.9 $\pm$ 0.2 \\
        & ViT-B/16 & 400 & 46.1 $\pm$ 0.3 & 56.2 $\pm$ 0.7 & 64.7 $\pm$ 0.3 \\[3mm]
        \multirow{4}{*}{DINO~\citep{caron2021emerging}} & ViT-S/16 & 800 & 38.9 $\pm$ 0.4 & 48.9 $\pm$ 0.3 & 58.5 $\pm$ 0.1 \\
        & ViT-B/16 & 400 & 41.8 $\pm$ 0.3 & 51.9 $\pm$ 0.6 & 61.4 $\pm$ 0.2 \\[1mm]
        & ViT-S/8 & 800 & 45.5 $\pm$ 0.4 & 56.0 $\pm$ 0.7 & 64.7 $\pm$ 0.4 \\
        & ViT-B/8 & 300 & 45.8 $\pm$ 0.5 & 55.9 $\pm$ 0.6 & 64.6 $\pm$ 0.2 \\[3mm]
        \multirow{3}{*}{MAE~\citep{he2021masked}} & ViT-B/16 & 1600 & 8.2 $\pm$ 0.3 & 25.0 $\pm$ 0.3 & 40.5 $\pm$ 0.2 \\
        & ViT-L/16 & 1600 & 12.3 $\pm$ 0.2 & 19.3 $\pm$ 1.8 & 42.3 $\pm$ 0.3 \\[1mm]
        & ViT-H/14 & 1600 & 11.6 $\pm$ 0.4 & 18.6 $\pm$ 0.2 & 32.8 $\pm$ 0.2 \\ \midrule
        \multirow{5}{*}{MSN (Ours)} & ViT-S/16 & 800 & 47.1 $\pm$ 0.1 & 55.8 $\pm$ 0.6 & 62.8 $\pm$ 0.3 \\
        & ViT-B/16 & 600 & 49.8 $\pm$ 0.2 & 58.9 $\pm$ 0.4 & 65.5 $\pm$ 0.3 \\[1mm]
        % & ViT-L/16 & 600 & 48.3 $\pm$ 0.2 & 57.8 $\pm$ 0.3 & 66.0 $\pm$ 0.2 \\[1mm]
        & ViT-B/8 & 600 & 55.1 $\pm$ 0.1 & 64.9 $\pm$ 0.7 & 71.6 $\pm$ 0.3 \\
        & ViT-B/4 & 300 & 54.3 $\pm$ 0.4 & 64.6 $\pm$ 0.7 & \cellcolor{fbApp}\bf 72.4 $\pm$ 0.3 \\[1mm]
        & ViT-L/7 & 200 & \cellcolor{fbApp}\bf 57.1 $\pm$ 0.6 & \cellcolor{fbApp}\bf 66.4 $\pm$ 0.6 & \cellcolor{fbApp}\bf 72.1 $\pm$ 0.2 \\
        \bottomrule
    \end{tabular}
\end{table}
\begin{table}[t]
    \caption{Low-shot evaluation on ImageNet-1K using 1\% of the labels (approximately 13 images per class). $^\dagger$Indicates evaluations we computed  using publicly available models.}
    \label{tb:lowshot_imagenet}
    \centering
    \begin{tabular}{l l l c}
        \bf Method & \bf Architecture & \bf Params. & \bf Top 1 \\\toprule
        \multicolumn{4}{c}{\scriptsize\bf Comparing similar architectures}\\[1mm]
        Barlow-Tw.~\citep{zbontar2021barlow} & RN50 & 24M & 55.0 \\
        SimCLRv2~\citep{chen2020big} & RN50 & 24M & 57.9 \\
        PAWS~\citep{assran2021semi} & RN50 & 24M & 66.5 \\[1mm]
        DINO~\citep{caron2021emerging} & ViT-S/16 & 22M & 64.5 \\
        iBOT~\citep{zhou2021ibot} & ViT-S/16 & 22M & 65.9 \\\midrule
        MSN & ViT-S/16 & 22M & \cellcolor{fbApp}\bf 67.2 \\
        \midrule \midrule
        \multicolumn{4}{c}{\scriptsize\bf Comparing larger architectures}\\[1mm]
        BYOL~\citep{grill2020bootstrap} &  RN200 ($2\times$) & 250M & 71.2 \\
        SimCLRv2~\citep{chen2020big} & RN151+SK ($3\times$) & 795M & 74.9 \\[1mm]
        %MAE$^\dagger$ & ViT-H/14 & 632M & 46.7 \\
        iBOT~\citep{zhou2021ibot}$^\dagger$ & ViT-B/16 & 86M & 69.7 \\
        DINO~\citep{caron2021emerging}$^\dagger$ & ViT-B/8 & 86M & 70.0 \\\midrule
        \multirow{2}{*}{MSN} & ViT-L/7 & 304M & 75.1 \\
        & ViT-B/4 & 86M & \cellcolor{fbApp}\bf 75.7 \\
        \bottomrule
    \end{tabular}
\end{table}

\subsection{Label-Efficient Learning}
The premise of SSL is to learn representations on unlabeled data that can be effectively applied to prediction tasks with few labels~\citep{chen2020big}. In this section we explore the performance of self-supervised approaches when very few labeled examples are available.

\paragraph{Extreme Low-Shot}
% \paragraph{1--5 labeled images per class}
% \im{Do we want to call this extreme low-shot to distinguish tables 1 and 2? Also helps us highlight this setting by a single name in the contributions.}
We first evaluate the classification performance of unsupervised models that have been pre-trained on ImageNet-1K, by using 1, 2, and 5 labeled images per class for supervised evaluation. We compare MSN to the joint-embedding approach, DINO~\citep{caron2021emerging}, the auto-encoding approach, MAE~\citep{he2021masked}, and the hybrid approach, iBOT~\citep{zhou2021ibot}, which combines a joint-embedding architecture with a token-based patch-level loss.
We download the official released models of each related approach for evaluation.

To adapt the joint-embeddings models to the supervised task, we freeze the weights of the pre-trained model and train a linear classifier on top using 1, 2 or 5 labeled samples (see Appendix~\ref{apndx:implementation}).
For MAE, we rely on partial fine-tuning~\citep{he2021masked}, except for the 1 image per class setting, and all results with the ViT-H/14 architecture, which use a linear classifier.
Partial fine-tuning corresponds to fine-tuning the last block of the pre-trained model along with a linear head. MAE benefits from partial fine-tuning, but for sufficiently large models, such as the ViT-H/14, this leads to significant overfitting in the low-shot regime. We compare both protocols in more detail in Appendix~\ref{apndx:additional_ablations}. 

Table~\ref{tb:lowshot} reports the extreme low-shot evaluation results.
MSN outperforms the other representation learning approaches across all levels of supervision.
Moreover, the improvement offered by MSN increases as the amount of available labeled data is decreased.
The performance of MSN also benefits from increased model size --- settings with less labeled data appear to benefit more from increased model depth and smaller patch sizes.

We also observe that joint-embedding approaches appear to be more robust to the limited availability of downstream supervision than reconstruction-based auto-encoding approaches.
To explain this observation, we refer to the Masked Auto-Encoders paper~\citep{he2021masked} which conjectures that using a pixel reconstruction loss results in encoder representations of a lower semantic level than other methods.
Conversely, the inductive bias introduced by invariance-based pre-training appears to be helpful in the low-shot regime.

\paragraph{1\% ImageNet-1K}
%In the low-shot learning evaluation benchmark, an unsupervised pre-trained model is evaluated with a small amount of labeled images; i.e., 1\% of ImageNet-1K, which is equivalent to roughly 13 images per class.
%For reproducibilty, we use the same 1\% data split used in previous work~\citep{chen2020simple}.
Table~\ref{tb:lowshot_imagenet} reports a comparison on the 1\% ImageNet-1K task, which is a standard benchmark for low-shot evaluation of self-supervised models~\citep{chen2020simple}.
For reference, the best reported result in the literature on 1\% labeled data is 76.6\%, achieved with a multi-stage semi-supervised pipeline, i.e., self-distilling from a fine-tuned ResNet-152 with 3$\times$ wider channels and selective kernels~\citep{chen2020big}.
Here we focus on comparing to other models trained in a self-supervised setting.
Our best MSN model using a ViT-B/4 achieves 75.7\% top 1 accuracy, surpassing the previous 800M parameter state-of-the-art convolutional network~\citep{chen2020big} while using significantly fewer parameters and no fine-tuning.
When focusing the comparison on similar architectures (models with similar FLOP counts), MSN also consistently improves upon previous approaches. 
\begin{table}[t]
    \caption{Linear evaluation on ImageNet-1K using 100\% of the labels.}
    \label{tb:vit-s_imagenet}
    \centering
    \begin{tabular}{l l l l c}
        \bf Method & \bf Architecture & \bf Params. & \bf Epochs & \bf Top 1 \\\toprule
        \multicolumn{5}{c}{\scriptsize\bf Comparing similar architectures}\\[2mm]
        SimCLRv2~\citep{chen2020big} & RN50 & 24M & 800 & 71.7 \\
        %Barlow-Tw. & RN50 & 24M & 1000 & 73.2 \\
        BYOL~\citep{grill2020bootstrap} & RN50 & 24M & 1000 & 74.4 \\
        DINO~\citep{caron2021emerging} & ViT-S/16 & 22M & 800 & 77.0 \\
        iBOT~\citep{zhou2021ibot} & ViT-S/16 & 22M & 800 & \bf 77.9 \\
        MSN & ViT-S/16 & 22M & 600 & \cellcolor{fbApp} 76.9 \\
        \midrule \midrule
        \multicolumn{5}{c}{\scriptsize\bf Comparing larger architectures}\\[2mm]
        % \multirow{3}{*}{BYOL} & RN50 ($2\times$) & 94M & 800 & 77.4 \\
        MAE~\citep{he2021masked} & ViT-H/14 & 632M & 1600 & 76.6 \\
        BYOL~\citep{grill2020bootstrap} &  RN200 ($2\times$) & 250M & 800 & 79.6 \\
        SimCLRv2~\citep{chen2020big} & RN151+SK ($3\times$) & 795M & 800 & 79.8 \\
        \multirow{1}{*}{iBOT}~\citep{zhou2021ibot} & ViT-B/16 & 86M & 400 & 79.4 \\
        %\multirow{3}{*}{DINO} & ViT-B/16 & 86M & 400 & 78.2 \\[1mm]
        %& ViT-S/8 & 22M & 800 & 79.7 \\
        DINO~\citep{caron2021emerging} & ViT-B/8 & 86M & 300 & 80.1 \\
        %\multirow{4}{*}{MoCov3} & ViT-B/16 & 86M & 600 & 76.7 \\
        %& ViT-L/16 & 304M & 300 & 77.6 \\
        %& ViT-H/16 & 632M & 300 & 78.1 \\
        %& ViT-BN-H/16 & 632M & 300 & 79.1 \\[1mm]
        MoCov3~\citep{chen2021empirical}& ViT-BN-L/7 & 304M & 300 & \bf 81.0 \\
        %\multirow{3}{*}{MAE} & ViT-B/16 & 86M & 1600 & 68.0 \\
        %& ViT-L/16 & 304M & 1600 & 75.8  \\[1mm]
        %& ViT-H/14 & 632M & 1600 & 76.6 \\[4mm]
        % \multirow{6}{*}{MSN} & ViT-B/16 & 86M & 600 & -- \\
        % & ViT-L/16 & 304M & 300 & (78.2) \\[1mm]
        % & ViT-S/8 & 22M & 300 & --\\
        % & ViT-B/8 & 86M & 800 & (78.8) \\[1mm]
        %\multirow{2}{*}{MSN} & ViT-B/7 & 86M & 600 & \cellcolor{fbApp} (80.4) \\
        MSN & ViT-L/7 & 304M & 200 & \cellcolor{fbApp} \bf 80.7 \\
        \bottomrule
    \end{tabular}
\end{table}
\begin{table}[t]
    \centering
    \caption{End-to-end fine-tuning of a ViT-B/16 encoder on ImageNet-1K using 100\% of the labels. MSN obtains competitive performance with both joint-embedding approaches and auto-encoding approaches.}
    \label{tb:finetuning_imagenet}
    \begin{tabular}{l c c}
        {\bf \small Initialization} & {\bf \small Pretrain Epochs} & {\bf \small Top 1}\\\toprule
        DINO~\citep{caron2021emerging} & 800 & 83.6\\
        BEiT~\citep{bao2021beit}\ & 800 & 83.2\\
        iBOT~\citep{he2021masked} & 800 &83.8\\
        MAE~\citep{he2021masked} & 1600 &83.6\\
        SimMIM~\citep{xie2021simmim} & - & 83.8\\
        MaskFeat~\citep{wei2021masked} & - & 84.0\\
        Data2Vec~\citep{baevski2022data2vec} & 800 &{\bf 84.2}\\
        MSN & 600 & \cellcolor{fbApp} 83.4\\ \bottomrule
    \end{tabular}
\end{table}

\subsection{Linear Evaluation and Fine-tuning}
%\mr{Why not move the ``Low-shot Learning'' paragraph to the previous subsection, which is already about low-shot learning, and then call this subsection something like ``Linear Evaluation and Fine-Tuning''? It seems strange to call this ``Comparison with state-of-the-art''. Wasn't the previous subsection already comparing with state-of-the-art methods? (Not all, but some...)}
In this section we compare with the state-of-the-art on standard evaluation benchmarks where more supervised samples are available to adapt the representation.
We use the full ImageNet-1K training images with 1.28M labels.

\paragraph{Linear Evaluation}
% In the linear evaluation benchmark, an unsupervised pre-trained model is evaluated by freezing its weights and training a linear classifier using the entire ImageNet-1K training set.
We evaluate self-supervised pretrained models by freezing their weights and training a linear classifier.
%using the entire 1.28M labeled images from the ImageNet-1K training set.
% In contrast to the low-shot evaluation benchmark, here models have access to all 1.28M labeled images.
Table~\ref{tb:vit-s_imagenet} reports the linear evaluation results on ImageNet-1K.
We observe that MSN performs competitively with the state-of-the-art. The best MSN model achieves 80.7\% top-1 accuracy. %\mr{Shouldn't the number of parameters in ViT-BN-L/7 be more than ViT-L/7 if the BN-L/7 uses batchnorm?}\nb{it is the same as it just replacs layernom with batch norm, it doesn't add extra layer}%, which is similar to the 81.0\% state-of-the-art obtained by MoCov3 on this benchmark.

\paragraph{Fine-Tuning}
In this evaluation setting, we finetune all the weights of the self-supervised model using all the labels from the ImageNet-1K training set.
% In the fine-tuning benchmark, an unsupervised pre-trained model is evaluated by fine-tuning its weights, along with a linear head, end-to-end, using 100\% of the ImageNet-1K labels.
%In contrast to the linear evaluation benchmark, the weights of the pre-trained model are not frozen, but instead used as an initialization for supervised learning. In these experiments, 
We focus on the ViT-B/16 architecture. We adopt the same fine-tuning protocol as~\citep{bao2021beit}, and provide the details in Appendix~\ref{apndx:implementation}.
Table~\ref{tb:finetuning_imagenet} reports the comparison with fine-tuning evaluation using 100\% labels on ImageNet-1K.
MSN is competitive with joint-embedding approaches, such as DINO, and generative auto-encoding approaches, such as MAE.

\subsection{Transfer Learning}
We also report transfer learning experiments on the CIFAR10, CIFAR100 and iNaturalist datasets in Tables~\ref{tb:transfer_ft} and~\ref{tb:transfer_lin} when using a self-supervised ViT-B/16 pre-trained on ImageNet-1K. Across all tasks and various levels of supervision MSN either outperforms or achieves similar results to DINO pre-training. Recall that MSN pre-training is also less computationally expensive than DINO pre-training due to the anchor masking.
\begin{table}[h]
    \centering
    \caption{{\bf Fine-Tuning Transfer Learning} with a ViT-Base/16 pre-trained on ImageNet-1K. Across all tasks, MSN either outperforms or achieves similar results to DINO pre-training. The MSN model is trained with a masking ratio of 0.3; i.e., dropping 30\% of patches, and thus reduces the computational cost of pre-training relative to DINO.}
    \label{tb:transfer_ft}
    \begin{tabular}{l c c c c}
    & \multicolumn{4}{c}{\bf Top 1}\\[1mm]
    \bf Method & CIFAR10 & CIFAR100 & iNat18 & iNat19 \\\toprule
    DINO & 99.0 & 90.5 & 72.0 & 78.2\\
    MSN & 99.0 & 90.5 & 72.1 & 78.1\\
    \bottomrule
    \end{tabular}
\end{table}
\begin{table}[h]
    \centering
    \caption{{\bf Linear Eval.~Transfer Learning} with a ViT-Base/16 pre-trained on ImageNet-1K. Across both tasks and various levels of supervision, MSN either outperforms or achieves similar results to DINO pre-training. The MSN model is trained with a masking ratio of 0.3; i.e., dropping 30\% of patches, and thus reduces the computational cost of pre-training relative to DINO.}
    \label{tb:transfer_lin}
    \begin{tabular}{l c c c}
    & \multicolumn{3}{c}{\bf Top 1}\\[1mm]
    & \multicolumn{2}{c}{CIFAR10} & CIFAR100 \\
    \bf Method  & {\scriptsize 4000 labels}  & {\scriptsize 50000 labels} \\\toprule
    DINO & 93.2 & 95.3 & 82.9 \\
    MSN & 93.8 & 95.7 & 82.8 \\
    \bottomrule
    \end{tabular}
\end{table}

\subsection{Ablations}
We now conduct a series of experiments to gain insights into the important design decisions used in MSN such as the masking strategy and the data augmentation strategy.
We measure the accuracy of the models by training a logistic regression classifier on the frozen trunk using 1\% of ImageNet-1K labels ($\sim$13 imgs/class).

\paragraph{Combining Random and Focal Masking}
In MSN we apply both random and focal masking to the anchor views.
Focal masking corresponds to selecting a small crop from the anchor view.
Random masking corresponds to randomly dropping potentially non-contiguous patches from the anchor view.
\begin{table}[h]
    \centering
    \caption{{\bf Masking strategy.} Impact of masking strategy on low-shot accuracy (1\% of ImageNet-1K labels) of a ViT-B/16. We only generate one anchor view of each image, except in the last row, where we generate two views, one with a Random Mask and one with a Focal Mask. A random masking ratio of $0.5$ is used. Applying  a  random  mask  to  the  anchor  view  is  better than applying no mask. By combining both random and focal masking strategies, we obtain the strongest performance.}
    \label{tb:mask_strategies}
    \begin{tabular}{lc}
      {\bf\small Anchor View} & {\bf\small Top 1} \\\toprule 
      No Mask &  49.3 \\ \midrule
      Focal Mask & 39.3  \\
      Random Mask & 52.3 \\
      Random Mask + Focal Mask & \cellcolor{fbApp}\bf 59.8 \\
      \bottomrule
    \end{tabular}
\end{table}

Table~\ref{tb:mask_strategies} reports the effect on low-shot evaluation when using a) No Masking, b) Focal Masking, c) Random Masking, or d)  Random and Focal Masking.
Applying a random mask to the anchor view is always better than applying no mask.
By contrast, applying only a focal mask degrades the performance, which highlights the importance of maintaining a global view during pre-training.
By combining both random and focal masking strategies,%, i.e., using both a masked global view and a cropped small view of the anchor, 
we obtain the strongest performance.

\paragraph{Random Masking Ratio}
Here we explore the relationship between the optimal masking ratio and the model size.
Table~\ref{tb:masking_ratio} reports the low-shot learning performance for various random masking ratios as we increase the model size.\footnote{Note that the performance of the ViT-S/16 can be improved by removing the Sinkhorn normalization, as we do in Table~\ref{tb:lowshot_imagenet}, however for consistency of evaluation with other models, we keep it in for this this ablation.}
\begin{table}[h]
    \centering
    \caption{{\bf Masking ratio.} Impact of pre-training random masking ratio (fraction of randomly dropped patches in each random mask) on ImageNet 1\% accuracy. Accuracy of larger models improves when leveraging aggressive masking during pre-training.}
    \label{tb:masking_ratio}
    \begin{tabular}{l c c c c}
        & \multicolumn{4}{c}{\bf Top 1} \\[2mm]
        & \multicolumn{4}{c}{Random Masking Ratio} \\
        \bf\small Architecture & 0.15 & 0.3 & 0.5 & 0.7 \\ \toprule
        ViT-S/16 & \cellcolor{fbApp}\bf 66.3 & 66.0 & 64.8 & -- \\
        ViT-B/16 & 68.8 & \cellcolor{fbApp}\bf 69.6 & -- & -- \\
        ViT-L/16 & \tt NaN & \tt NaN & \cellcolor{fbApp}\bf 70.1 & 69.4 \\
        \bottomrule
    \end{tabular}
\end{table}

When increasing the model size, we find that increasing the masking ratio (dropping more patches) is helpful for improving low-shot performance.
We also find that the ViT-L/16 runs with weak masking are unstable, while the runs with more aggressive masking are quite stable. However, we do not have sufficient evidence to claim that increasing the masking ratio always improves the stability of large ViT pre-training.

\paragraph{Augmentation Invariance and Low-Shot Learning}
We explore the importance of data-augmentation invariance for low-shot learning.
%By using different views for the anchor and target images, MSN learns representations that are invariant to the view-generating data-augmentations.
%We hypothesize that this invariance leads to improvements in low-shot learning. To explore this hypothesis, 
We pretrain a ViT-B/16 with MSN, where the teacher and anchor networks either share the input image view or use different input views; in both cases, the anchor view is always masked.
The views are constructed by applying random ColorJitter, Crop, Horizontal Flips, and GaussianBlur to the input image.
\begin{table}[h]
  \centering
  \caption{Impact of view-sharing during pre-training on low-shot accuracy (1\% of ImageNet-1K labels) of a ViT-B/16. The target view is constructed by applying random ColorJitter, Crop, Horizontal Flips, and GaussianBlur to the input image. When using the same image view, MSN finds a shortcut solution. Using color jitter prevents this pathological behaviour. Randomly applying additional geometric data transformations to the anchor further improves performance, demonstrating the importance of view invariance in the low-shot setting.}
  \label{tb:invariance_aug}
    \begin{tabular}{l | c}
    {\bf\small Anchor View Generation} & {\bf\small Top 1} \\\toprule
    Target View &  7.0 \\
    Target View + ColorJitter & 48.7  \\
    Target View + ColorJitter + Crop + Flip + GaussianBlur  & \cellcolor{fbApp}\bf 52.3 \\
    \bottomrule
    \end{tabular}
\end{table}

Table~\ref{tb:invariance_aug} reports top-1 accuracy when evaluating with 1\% of ImageNet-1K labels. Sharing the view leads to a top-1 accuracy of $7\%$; MSN finds a shortcut solution relying on color statistics. Using different colors in the input views resolves this pathological behaviour and achieves a top-1 of $48.3\%$. Further applying the geometric data-augmentations independently to the two views (as opposed to sharing views) further improves the performance to $52.3\%$, showing the importance of learning view-invariant representations in the low-shot setting.

% \im{Re-ordering ablations so compute stuff is the last one will ensure that the flow isn't broken. Every other ablation measures accuracy.}
\paragraph{Random Masking Compute and Memory}
We look at the effect of the random masking ratio, i.e., the fraction of dropped patches from the global anchor view, on the computational requirements of large model pre-training.
In each iteration we also generate 10 focal views (small crops) of each input image; the random masking ratio has no impact on these views.
\begin{table}[h]
    \centering
    \caption{Impact of random masking ratio on GPU memory usage and runtime when pre-training a ViT-L/7. Measurements are conducted on a single AWS {\tt p4d-24xlarge} machine, containing 8 A100 GPUs, using a batch-size of 2 images per GPU. In each iteration we also generate 10 focal views (small crops) of each input image; the random masking ratio has no impact on these views. Using more aggressive masking of the global view progressively reduces device memory utilization and speeds up training.}
    \label{tb:mask_compute}
    \begin{tabular}{c | c c}
        Masking Ratio & Mem./GPU & Throughput \\\toprule
        0.0 & 26G & 415 imgs/s \\
        0.3 & 21G & 480 imgs/s \\
        0.5 & 18G & 525 imgs/s \\
        \rowcolor{fbApp}
        0.7 & 17G & 600 imgs/s \\
        \bottomrule
    \end{tabular}
\end{table}

Table~\ref{tb:mask_compute} reports the memory consumption and throughput (imgs/s) of a ViT-L/7 model on a single AWS {\tt p4d-24xlarge} machine using a batch-size of 2 images per GPU.
As expected, using more aggressive masking of the global view progressively reduces device memory utilization and speeds up training.
For example, by randomly masking 70\% of the patches, we can use MSN to pre-train a full-precision ViT-Large with a patch-size of $7\times7$ on as few as 18 AWS {\tt p4d-24xlarge} machines.
Without masking, the same job requires over 42 machines when using the default batch-size of 1024 images.

\section{Conclusion}

We propose Masked Siamese Networks (MSNs), a self-supervised learning framework that leverages the idea of mask-denoising while avoiding pixel and token-level reconstruction. We demonstrate empirically that MSNs learn strong off-the-shelf representations that excel at label-efficient learning, while simultaneously improving the scalability of joint-embedding architectures.
% , while remaining competitive when more supervision is available for the downstream task.
By relying on view-invariant representation learning, MSN does require the specification of data transformations, and it may be that the optimal transformations and invariances are dataset and task dependant.
In future work, we plan to explore more flexible mechanisms to learn those transformations and also explore the use of equivariant representations.

% \paragraph*{Acknowledgments}
% Use unnumbered third level headings for the acknowledgments. All acknowledgments, including those to funding agencies, go at the end of the paper.

% \vfill\pagebreak
\bibliography{refs.bib}
\bibliographystyle{arxivtemplate/arxiv.bst}
\vfill\pagebreak
\appendix

\section{Implementations Details}
\label{apndx:implementation}

In this appendix section we provide the implementation details for MSN pre-training and evaluation.

\subsection{MSN Pre-training}

We adopt similar hyper-parameter settings that have previously been reported in the self-supervised literature for training Vision Transformers~\citep{caron2021emerging,chen2020exploring}.
Specifically, for pre-training, we use the AdamW optimizer~\citep{loshchilov2017decoupled} with a batch-size of 1024.
We linearly warm up the learning-rate from $0.0002$ to $0.001$ during the first 15 epochs, and decay it following a cosine schedule thereafter.
To construct the different image views, we apply the SimCLR data augmentations of~\citet{chen2020simple} to each sampled image; namely random crop, horizontal flip, color distortion, and Gaussian blur.
For each sampled image, we generate one large anchor view of size $224\times224$ pixels, and apply a random mask with a pre-specified masking ratio (0.15 for the ViT-S/16, 0.3 for the ViT-B/16 and ViT-B/8, and 0.7 for the ViT-L/7 and the ViT-B/4).
For each sampled image, we also generate 10 small focal anchor views of size $96\times96$ pixels.
We use a temperature of $0.1$ for the anchor network, and a temperature of $0.025$ for the target network.
Following the DINO method of~\citet{caron2021emerging}, we update the target network via an exponential moving average of the anchor network with a momentum value of $0.996$, and linearly increase this value to $1.0$ by the end of training.
Similarly, following~\citet{caron2021emerging}, weight decay is set to $0.04$ and increased to $0.4$ throughout training via a cosine schedule.
By default, we set the {\sc me-max} regularization weight $\lambda$ to $1.0$ and apply Sinkhorn normalization to the targets~\citep{caron2020unsupervised} to avoid having to tune the {\sc me-max} regularization weight; however, in general, we observe stronger MSN performance when omitting Sinkhorn normalization (see Appendix~\ref{apndx:additional_ablations}).
We train with a 3-layer projection head with output dimension 256 and batch-normalization at the input and hidden layers, and use 1024 prototypes of dimension 256.
We observe that using more prototypes has little effect on training, but using too few prototypes can hurt performance (see Appendix~\ref{apndx:additional_ablations}).
We discard the projection head during evaluation, and always use the representations computed from the output of the target encoder trunk for evaluation.

\subsection{Low-Shot Evaluation}

To avoid overfitting, we freeze the weights of the pre-trained model and train a linear classifier on top using 1, 2 or 5 labeled samples per class.
Specifically, we take a single center crop of each labeled image, extract its representation using the pre-trained model, and then train a classifier on these representations using L$_2$-regularized logistic regression.
Following~\citep{caron2021emerging}, we use the {\tt cyanure} package~\citep{mairal2019cyanure} to run logistic regression on the extracted representations.
This objective is smooth and strongly-convex (i.e., has a unique minimizer) and can therefore be efficiently solved for using the {\tt cyanure} python numerical solver on a single CPU core. All low-shot evaluations (including the 1\% ImageNet-1K evaluation) are computed with this procedure, except for models pre-trained using MAE~\citep{he2021masked}, which benefit from using partial fine-tuning~\citep{he2021masked}.

Partial fine-tuning corresponds to fine-tuning the last block of the pre-trained model along with a linear head. MAE benefits from partial fine-tuning, but for sufficiently large models, such as the ViT-H/14, this leads to significant overfitting in the low-shot regime. Our results in Table~\ref{tb:lowshot_imagenet} and Figure~\ref{fig:lowshot} report the best performance across evaluation methods for MAE. In particular, all the MAE results are obtained via partial fine-tuning, except for the 1 image per class setting, and all results with the ViT-H/14 architecture, which use a linear head. We compare both protocols in more detail in Appendix~\ref{apndx:additional_ablations}. 

\subsection{Linear Evaluation}
For linear evaluation, we use a similar procedure as~\citet{he2021masked}.
Specifically, we use a large batch-size of 16,384 images and train a linear classifier for 100 epochs using a learning rate of $6.4$, and decay it following a cosine schedule.
We only apply basic data augmentations; namely, random resized crops to a resolution of $224\times224$ pixels, and random horizontal flips.
We also L$_2$-normalize the representations before feeding them into the linear classifier, and optimize the classifier weights using SGD with Nesterov momentum.
We do not apply any weight-decay and do not use any warmup.

\subsection{Fine-Tuning Evaluation}
We follow the common practice for fine-tuning SSL pre-trained ViT models.
Specifically, we follow the setup of~\citep{touvron2021training,bao2021beit,he2021masked}.
We fine-tune a pre-trained ViT model for 100 epochs on the full supervised ImageNet-1K training data set using the AdamW~\citep{loshchilov2017decoupled} optimizer.
We use a batch size of 1024 with a learning rate of $0.002$.
The learning rate is linearly warmed-up during the first 5 epochs and decayed with a cosine schedule thereafter.
A layer-wise decay of $0.65$ is also applied, along with the data augmentations defined by RandAugment($9$, $0.5$) ~\citep{cubuk2019autoaugment}. We additionally use label smoothing set to $0.1$, mixup~\citep{zhang2017mixup} set to $0.8$, cutmix~\citep{yun2019cutmix} set to $1.0$, and drop path set to $0.2$.

\subsection{Transfer Learning}

\subsubsection{Linear Evaluation}
When performing linear evaluation for transfer learning, we freeze the weights of the ImageNet-1K pre-trained model and optimize a linear classifier on top. We resize each downstream image to $256 \times 256$ pixels, and take a single center crop of size $224 \times 224$ pixels. Next, we extract a representation of each image using the pre-trained model, and subsequently train a classifier on top using L$_2$-regularized logistic regression. %Similar to our low-shot evaluation, this objective is smooth and strongly-convex with a unique minimizer, and can therefore be solved for efficiently using the {\tt cyanure} python numerical solver on a single CPU core.

\subsubsection{Fine Tuning}
When performing end-to-end fine-tuning for transfer learning, we follow the protocol of DeiT and DINO~\citep{touvron2021training,caron2021emerging}.  Models transferred to CIFAR10 and CIFAR100 are fine-tuned for 1000 epochs using a batch size of 768 and a learning rate of $0.000075$. Models transferred to iNat18 and iNat19 models are fine-tuned for 300 epochs using a batch size of 1024 and a learning of $0.0001$.
All transfer fine-tuning experiments use the data augmentations defined by RandAugment($9$, $0.5$)~\citep{cubuk2019autoaugment}.
We also use label smoothing set to $0.1$, mixup~\citep{zhang2017mixup} set to $0.8$, cutmix~\citep{yun2019cutmix} set to $1.0$, and drop path set to $0.1$.
The learning rate is linearly warmed-up during the 5 first epochs and decayed with a cosine schedule thereafter.

\section{Theoretical Guarantees}
\label{apndx:theory}

In this section we describe how MSN pre-training provably avoids representation collapse.

Recall that in each iteration of pre-training, we sample a mini-batch of $B \geq 1$ images, and generate $M \geq 1$ anchor views of each image.
Here we show that MSN is guaranteed to avoid the trivial collapse of representations under the following assumption.

\begin{assumption}[Target Sharpening]
\label{ass:sharp}
The target $p^+$ is sharpened, such that it is not equal to the uniform distribution.
\end{assumption}
\begin{proposition}[Non-Collapsing Representations]
\label{prop:collapse}
Suppose Assumption~\ref{ass:sharp} holds.
If $f_\theta(\cdot)$ is such that the representations collapse, i.e., $z_{i,m} = z_{j,k}$ for all $i,j \in [B]$ and $m,k \in [M]$, then $\norm{\nabla_\theta H(p^+_{i}, p_{i,m})} + \norm{\nabla_\theta H(\overline{p})} > 0$ for all $i,m$.
\end{proposition}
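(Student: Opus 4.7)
The plan is to branch on whether the common collapsed prediction $p^\star \defeq \mathrm{softmax}(z^\star \cdot \mathbf{q}/\tau)$ is the uniform distribution (here $z^\star$ denotes the single representation produced on every view, so that $p_{i,m}=p^\star$ for every $(i,m)$ and also $\overline{p}=p^\star$). One case kills $\nabla_\theta H(\overline{p})$, the other kills $\nabla_\theta H(p^+_i,p_{i,m})$, and Assumption~\ref{ass:sharp} precisely rules out both vanishing simultaneously.

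If $p^\star$ is not uniform, I would show that $H(\overline{p})=H(p^\star)$ sits strictly below its maximum $\log K$; writing $s_{i,m}$ for the pre-softmax logits at view $(i,m)$, the standard softmax entropy derivative gives $\partial H(\overline{p})/\partial s_{i,m,j} = -p^\star_j(\log p^\star_j + H(p^\star))/(MB)$. This vanishes in every coordinate only when $\log p^\star_j$ is independent of $j$; since softmax has full support this forces $p^\star$ to be uniform, contradicting the case hypothesis, so the logit-space gradient of $H(\overline{p})$ is nonzero. If instead $p^\star$ is uniform, then $H(\overline{p})$ is at its global maximum and its gradient vanishes, so I would target the cross-entropy term. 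The identity $\partial H(p^+_i,p_{i,m})/\partial s_{i,m,j}=p_{i,m,j}-p^+_{i,j} = 1/K - p^+_{i,j}$ combined with Assumption~\ref{ass:sharp} (that the sharpened $p^+_i$ is not uniform) immediately gives a nonzero logit-space gradient for every $(i,m)$.

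The step I expect to be the main obstacle is promoting these nonzero logit-space gradients to nonzero gradients with respect to the encoder parameters $\theta$, since a pathological Jacobian $\partial s_{i,m}/\partial \theta$ could in principle annihilate them. I would close this either by appealing to generic non-degeneracy of the encoder (in a sufficiently expressive network the Jacobian is locally surjective onto the logit subspace, so logit-space nonvanishing lifts to parameter-space nonvanishing), or, more cleanly, by reading $\theta$ broadly as the joint encoder--prototype parameters that are jointly minimized in equation~\eqref{eq:objective}: since $\partial s_{i,m,j}/\partial \mathbf{q}_j = z^\star/\tau$ is nonzero as soon as $z^\star\neq 0$, the nonvanishing in logit space transfers immediately. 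The remaining verifications are routine bookkeeping.
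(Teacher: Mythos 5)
Your proof takes essentially the same route as the paper's: split on whether the common collapsed prediction $p^\star$ is the uniform distribution, use Assumption~\ref{ass:sharp} to force $p_{i,m}\neq p^+_i$ (hence a nonzero cross-entropy gradient) in the uniform case, and use non-uniformity of $\overline{p}$ (hence non-maximality of $H(\overline{p})$) in the other case. The one step you flag as the main obstacle --- promoting nonzero logit-space gradients to nonzero gradients in $\theta$ --- is simply elided by the paper, which asserts $\norm{\nabla_\theta H(p^+_i,p_{i,m})}>0$ and $\norm{\nabla_\theta H(\overline{p})}>0$ directly from the distributional facts; your explicit softmax-gradient computations and the observation that the prototype parameters carry a nonzero partial (since the normalized $z^\star$ is nonzero) make your version strictly more careful than the published proof.
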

\begin{proof}
For L$_2$-normalized representations and prototypes, the prediction $p_{i,m} \in \Delta_K$ corresponding to the $m^{\text{th}}$ view of the $i^{\text{th}}$ image in the mini-batch is given by
\[
    p_{i,m} \defeq \text{softmax}\left( \frac{z_{i,m} \cdot {\bf q}}{\tau} \right),
\]
where ${\bf q} \in \R^{K \times d}$ is the prototype matrix with $K > 1$ learnable prototypes, each of dimension $d$, and $\tau > 0$ is a scaler temperature.
Since $z_{i,m} = z_{j,k}$ for all $i,j \in [B]$ and $m,k \in [M]$, it holds that $z_{i,m} \cdot {\bf q} = z_{j,k} \cdot {\bf q}$, and therefore $p_{i,m} = p_{j,k}$.
Now consider two separate cases.

{\bf Case 1:} The predictions are equal to the uniform distribution, i.e., $p_{i,m} = \frac{1}{K} {\bf 1}_K$, where ${\bf 1_K} \in \R^K$ is the K-dimensional vector with each entry equal to $1$. In that case, since, by Assumption~\ref{ass:sharp}, the targets $p^+_{i}$ are sharpened such that they are not equal to the uniform distribution, it follows that $p_{i,m} \neq p^+_{i}$, and hence $\norm{\nabla_\theta H(p^+_{i},p_{i,m})} > 0$.

{\bf Case 2:} The predictions are not equal to the uniform distribution, i.e., $p_{i,m} \neq \frac{1}{K} {\bf 1}_K$. In that case, we have that the average prediction across all the anchor views $\overline{p} \defeq \frac{1}{MB}\sum^B_{i=1}\sum^M_{m=1} p_{i,m}$ is also not equal to the uniform distribution; i.e., $\overline{p} \neq \frac{1}{K}{\bf 1}_K$, and hence $\norm{\nabla_\theta H(\overline{p})} > 0$.
\end{proof}

Proposition~\ref{prop:collapse} provides a theoretical guarantee that MSN is immune to the trivial collapse of representations.
In short, the underlying principle is that entropy maximization encourages the anchor predictions to utilize the full set of prototypes, thereby preventing collapse to a non-uniform distribution, while target sharpening encourages the anchor predictions to be confident, thereby preventing collapse to the uniform distribution.

Note that the sharpening mechanism defined in Section~\ref{sec:methodology} (i.e., applying a temperature $\tau^+$ in the target network softmax) may not always satisfy Assumption~\ref{ass:sharp}, unless one introduces a simple tie-breaking rule.
In practice, such a rule is not necessary as the targets never become uniform (since we apply sharpening from the start of the training), although, it is important to use a sufficiently small temperature value in this case.

\section{Additional Ablations}
\label{apndx:additional_ablations}

\subsection{Sinkhorn Normalization}
By default, we set the {\sc me-max} regularization weight $\lambda$ to $1.0$ and apply Sinkhorn normalization on the targets to avoid having to tune the {\sc me-max} regularization weight.
However, we find that tuning the {\sc me-max} regularization weight and omitting Sinkhorn normalization can result in better performance; cf.~Table~\ref{tb:sinkhorn}.
\begin{table}[h]
    \centering
    \caption{{\bf Effect of Sinkhorn normalization.} We train a ViT-S/16 with a masking ratio of 0.15, and explore the impact of Sinkhorn normalization during pre-training on low-shot performance with 1\% of ImageNet-1K. Tuning the {\sc me-max} regularization weight and omitting Sinkhorn normalization gives better performance.}
    \label{tb:sinkhorn}
    \begin{tabular}{c c c | c}
        \bf Architecture & \bf Target Normalization & \bf{\sc me-max} weight $\lambda$ & \bf Top 1 \\\toprule
        \multirow{3}{*}{ViT-S/16} & Sinkhorn & 1.0 & 66.4 \\
        & None & 1.0 & 60.8 \\
        & None & 5.0 & \bf\cellcolor{fbApp} 67.2 \\
        \bottomrule
    \end{tabular}
\end{table}

\subsection{Number of Prototypes}
By default we train with 1024 prototypes of dimension 256. In this section we explore the effect of the number of prototypes on low-shot performance.
We observe that using more prototypes has little effect on training, but using too few prototypes can hurt performance; cf.~Table~\ref{tb:prototypes}.
\begin{table}[h]
    \centering
    \caption{{\bf Effect of number of prototypes.} We train a ViT-B/16 with a masking ratio of 0.3, and explore the impact of the number of prototypes during pre-training on low-shot performance with 1\% of ImageNet-1K. Using more prototypes has little effect on training, but using fewer prototypes can degrade performance.}
    \label{tb:prototypes}
    \begin{tabular}{c c | c}
        \bf Architecture & \bf Prototypes & \bf Top 1 \\\toprule
        \multirow{3}{*}{ViT-B/16} & 512 & 67.6 \\
        & 1024 & \bf\cellcolor{fbApp} 69.5 \\
        & 2048 & \bf\cellcolor{fbApp} 69.5 \\
        \bottomrule
    \end{tabular}
\end{table}

\subsection{Masked Auto-Encoder Partial Fine-Tuning}
Here we explore the low-shot performance of MAE when relying on alternative evaluation strategies.
\citet{he2021masked} conjecture that using pixel reconstruction in their MAE objective results in encoder representations of a lower semantic level than other methods, which may explain their difficulty in training a linear classifier on the frozen features.
In Table~\ref{tb:mae-eval} we explore the effect of partial fine-tuning on the low-shot performance of pre-trained MAE models.
Partial fine-tuning corresponds to fine-tuning the last block of the pre-trained model along with a linear head on the available labeled samples.
As observed in~\citep{he2021masked}, MAE benefits from partial fine-tuning.
However, for sufficiently large models, such as the ViT-H/14, this leads to significant overfitting in the low-shot regime, where one must instead resort to linear evaluation.
We report the best numbers for MAE across the two low-shot adaptation strategies in Figure~\ref{fig:lowshot}.
\begin{table}[h]
    \centering
    \caption{{\bf MAE low-shot evaluations.} Top-1 low-shot validation accuracy for different training strategies with MAE pre-trained models. Partial fine-tuning corresponds to fine-tuning the last block of the pre-trained model along with a linear head on the available labeled samples. Linear evaluation corresponds to training a linear classifier on top of the frozen pre-trained encoder. MAE benefits from partial fine-tuning, but for sufficiently large models, such as the ViT-H/14, this leads to significant overfitting in the low-shot regime, where one must instead one must resort to linear evaluation.}
    \label{tb:mae-eval}
    \begin{tabular}{l l c c c}
        & & \multicolumn{3}{c}{\bf\small Top 1}\\[1.5mm]
        & & \multicolumn{3}{c}{\small Images per Class}\\
        \bf\small Architecture & \bf\small Adaptation Strategy & 2 & 5 & $\sim$13 \\\toprule
        \multirow{2}{*}{ViT-B/16} & Partial Fine-Tuning & \bf\cellcolor{fbApp} 25.0 & \bf\cellcolor{fbApp} 40.5 & \bf\cellcolor{fbApp} 51.1 \\
        & Linear Eval. & 14.5 & 25.2 & 36.6 \\\midrule
        \multirow{2}{*}{ViT-L/16} & Partial Fine-Tuning & 19.3 & \bf\cellcolor{fbApp} 42.3 & \bf\cellcolor{fbApp} 59.4 \\
        & Linear Eval. & \bf\cellcolor{fbApp} 22.1 & 35.7 & 48.6 \\\midrule
        \multirow{2}{*}{ViT-H/14} & Partial Fine-Tuning & \tt rand & \tt rand & \tt rand \\
        & Linear Eval. & \bf\cellcolor{fbApp} 18.6 & \bf\cellcolor{fbApp} 32.8 & \bf\cellcolor{fbApp} 46.7 \\\bottomrule
    \end{tabular}
\end{table}

\section{MSN Representation Robustness}

Next we report the performance of MSN-pre-trained models on datasets that have been developed to evaluate the robustness of models trained on the standard ImageNet training set. We consider four datasets: ImageNet-A (\citet{hendrycks2021nae})\footnote{https://github.com/hendrycks/natural-adv-examples}, 
ImageNet-R (\citet{hendrycks2021many})\footnote{https://github.com/hendrycks/imagenet-r},
ImageNet-Sketch (\citet{wang2019learning})\footnote{https://github.com/HaohanWang/ImageNet-Sketch}, and
ImageNet-C (\citet{hendrycks2019robustness})\footnote{https://github.com/hendrycks/robustness}.

Table~\ref{tb:robustness} shows results for a ViT-B/16 pre-trained using MSN and fine-tuned using the protocol described in Appendix~\ref{apndx:implementation}. For comparison, we also report the performance of a fine-tuned ViT-B/16 pre-trained using MAE~\citep{he2021masked}, along with a supervised ResNet50 baseline, which is available in the PyTorch Torchvision package\footnote{https://github.com/pytorch/vision}. For ImageNet-A, -R, and -Sketch, we report top-1 accuracy on each provided validation set. For ImageNet-C, we use the mean Corruption Error metric proposed in~\citep{hendrycks2019robustness}, where values are normalized by AlexNet performance on the same validation set.
\begin{table}[h]
  \centering
  \caption{{\bf Evaluation on alternative ImageNet validation sets.} We evaluate the performance of a fine-tuned ViT-B/16 model on four alternative ImageNet validation sets: ImageNet-A, ImageNet-R, ImageNet-Sketch, and ImageNet-C. The metric used for the first three (-A, -R, and -Sketch) is top-1 accuracy on the validation set. On ImageNet-C, performance is measured in terms of mean Corruption Error (mCE)~\citep{hendrycks2019robustness}.}
  \label{tb:robustness}
       \begin{tabular}{l c c c c}
        & \bf IN-A & \bf IN-R & \bf IN-Sketch & \bf IN-C \\
        & (top-1 $\uparrow$) & (top-1 $\uparrow$)& (top-1 $\uparrow$) & (mCE $\downarrow$)\\ \midrule
        Supervised ResNet50 & 0.04 & 36.11  & 24.2 & 76.7 \\ 
        MAE ViT-B/16~\citep{he2021masked} & 35.9 & 48.3 & 34.5 & 51.7 \\ \midrule
        MSN ViT-B/16 & \bf\cellcolor{fbApp} 37.5 & \bf\cellcolor{fbApp} 50.0 & \bf\cellcolor{fbApp} 36.3 & \bf\cellcolor{fbApp} 46.6 \\ \bottomrule
    \end{tabular}
\end{table}

In each case we find that the performance of an MSN-pretrained ViT-B/16 is comparable or better than that of an MAE-pretrained ViT-B/16. Note also, that larger MAE-pretrained models achieve stronger performance on all four datasets~\citep{he2021masked}.

\section{MSN Invariance to Masking}
\label{apndx:representation _properties}
% In this appendix we take a closer looks at the properties of the representation learned by MSN.

%\subsection{Robustness to Masked Patches}
The goal of MSN pretraining is to denoise the input images at the representation level by ensuring that the representation of a masked input matches the representation of the unmasked one.
Here, we shows that MSN pretraining learns representations that are robust to patch masking.

In Table~\ref{tb:robust_mask}, we evaluate the performance of MSN and DINO when masking parts of an image during evaluation.
Models are evaluated on 1\% of ImageNet-1K using logistic regression on top of frozen features.
The logistic regression classifier is trained using masked images, and then evaluated on the standard ImageNet-1K validation set using unmasked images.

If the MSN representations are robust to missing image patches, then a linear classifier should be able to identify generalizable features when training on the representations of masked images.
On the other hand, if the representations output by the learned encoder are not robust to missing image patches, then a linear classifier would have difficulty finding generalizable features when training on the representations of masked images.
% If MSN is able to denoise the input image at the representation level, then a linear classifier should be able to identify generalizable features when training on the representations of masked images.
% On the other hand, if the learned encoder is not able to denoise the input images, then a linear classifier would have difficulty finding generalizable features when training on the representations of masked images.

We observe that masked pre-training results in representations that are more robust to patch removal, suggesting that MSN is performing an image denoising at the representation level.
Furthermore, models pre-trained with more aggressive masking exhibit this quality to a higher degree.
For example, the low-shot accuracy of ViT-L/7 pre-trained with aggressive masking is almost unaffected when we remove 70\% of the patches at test time; 75.1\% top-1 without dropping patches during evaluation versus 74.9\% top-1 when dropping 70\% of the patches during evaluation.
\begin{table}[h]
  \centering
  \caption{{\bf Robustness to missing patches (low-shot).} Evaluating the low-shot accuracy of pre-trained models on 1\% of ImageNet-1K when corrupting the annotated images by dropping patches. We train a linear classifier using masked images, and then evaluate on the standard ImageNet-1K validation set using unmasked images. We observe that MSN pre-training leads to representations that are more robust to masking. Moreover, models pre-trained with more aggressive masking exhibit this behaviour to a higher degree.}
  \label{tb:robust_mask}
   \begin{tabular}{l l c c c | c} 
        & & & \multicolumn{2}{c}{\bf Top 1} \\[2mm]
        & & & \multicolumn{2}{c}{Eval. Masking Ratio} \\
        \bf\small Alg. & \bf\small Arch. & \bf\small Pre-train Masking Ratio & 0.0 & 0.7 & $\Delta$ \\\toprule
        DINO & ViT-B/16 & 0.0 & 67.0 & 63.1 & -3.9\\\midrule
        \multirow{2}{*}{MSN} & ViT-B/16 & 0.3 & 69.5 & 67.1 & -2.4\\
         & ViT-L/7 & 0.7 & 75.1 & 74.9 & \cellcolor{fbApp}\bf -0.2\\ 
        \bottomrule
    \end{tabular}
\end{table}
\begin{table}[t]
  \footnotesize
  \centering
  \caption{{\bf Robustness to missing patches (cosine-similarity).} Average Cosine Distance between masked and unmasked representations of the same image. We compare the representations learned with MSN masked pre-training to those learned with DINO when using a ViT-B/16 encoder. The MSN ViT-B/16 is pre-trained with a masking ratio of 0.3. The cosine distances are computed and averaged over the ImageNet-1k validation set. The cosine similarity between masked and unmasked representations of the same image is higher when pre-training with MSN, supporting the observation that masked-pretraining results in representations that are more robust to patch-removal.\\}
  \label{tb:robust_mask_cosine}
        {\small
       \begin{tabular}{l c c c c c}
        & \multicolumn{5}{c}{\bf Cosine Similarity} \\[2mm]
        & \multicolumn{5}{c}{Eval. Masking Ratio} \\
        \bf\small Alg. & 0.15 & 0.3 & 0.5 & 0.7 & 0.9 \\ \toprule
        DINO & 0.98 & 0.97 & 0.92 & 0.81 & 0.56\\ 
        MSN & \cellcolor{fbApp}\bf 0.99 & \cellcolor{fbApp}\bf 0.99 & \cellcolor{fbApp}\bf 0.99 & \cellcolor{fbApp}\bf 0.98 & \cellcolor{fbApp}\bf 0.97\\ \bottomrule
    \end{tabular}}
\end{table}

We also report the average cosine distance between masked and unmasked representations of the same image in Table~\ref{tb:robust_mask_cosine}.
As expected, the cosine similarity between masked and unmasked representations of the same image is higher when pre-training with MSN, supporting the observation that masked-pretraining results in representations that are more robust to patch-removal.

\section{Qualitative Analysis}
\label{apndx:qualitative}
We qualitatively investigate the properties of the MSN pre-trained representations.
We follow the RCDM framework~\citep{bordes2021high} and train a conditional generative diffusion model, which maps a learned image representation back to pixel space. Specifically, RCDM takes as input random noise and the representation vector of an image computed by an SSL model (either an MSN pre-trained model or a DINO pre-trained model in this analysis), and aims to reconstruct the image as close as possible to the original one through a diffusion process. 
% This objective encourages the conditional generative diffusion model to extract as much information as possible from the image representation.

By using RCDM to sample an image based on its SSL representation, we can visualize how different pre-training strategies affect the degree of information contained in the representation.
Qualities that vary across RCDM samples represent information that is not contained in the pre-trained representation. Qualities that are semantically common across samples represent information contained in the representation.

\subsection{Comparison with DINO}
We apply RCDM on top of either a DINO or MSN pre-trained ViT-B/8 encoder to generate images of resolution $128\times 128$ pixels. RCDM is trained using unmasked images processed with the ViT-B/8 encoder. We then use masked images from the validation set at sampling time.

In Figure~\ref{fig:qualitiative_50percent}, we generate samples for RCDM when masking 50\% of the conditioning images.
The first column depicts images from the ImageNet validation set. 
The second column depicts the same image, but with 50\% of the patches masked. The representation of the masked image is used as conditioning for the RCDM diffusion model.
The subsequent columns in Figure~\ref{fig:qualitiative_50percent} show various images sampled from the conditioned RCDM diffusion model.  
We observe that the RCDM samples conditioned on the MSN representations (cf.~Figure~\ref{fig:qualitative_msn}) preserve the semantic category of the masked images, and remain visually close to the original image, despite the missing patches.
By contrast, the samples generated by the RCDM diffusion model conditioned on the DINO representations (cf.~Figure~\ref{fig:qualitative_dino}) are more blurry and do not preserve as well the semantic category of the masked images.

Figure~\ref{fig:qualitiative_80percent} depicts similar visualizations, but with 80\% of the patches masked.
In this case, even with 80\% of the patches missing, samples generated by RCDM conditioned on MSN representations preserve some of the structure in original images (cf.~Figure~\ref{fig:qualitative_msn_high_mask}).
On the other hand, conditioning on DINO representations leads to almost uniform background generation (cf.~Figure~\ref{fig:qualitative_dino_high_mask}). 
% Even with high masking value such 80\% of the input patches, see Figure \ref{fig:qualitative_msn_high_mask}, samples generated by a RCDM conditioned with MSN preserve some  structure of the original images. On the other hand using DINO representation in Figure \ref{fig:qualitative_dino_high_mask} leads to almost uniform background generation. %Figure\textcolor{red}{FIXME} shows different samples for a given input with different input mask and for MSN model trained with different random masking probability. We observe that MSN model trained with higher masking probability 1) is more robust to input mask 2) discard more information regarding the input. 

\subsection{MSN ViT-L/7 Visualizations}
We apply RCDM on top of the MSN pre-trained ViT-L/7 encoder to generate images with a resolution of $256 \times 256$ pixels. RCDM is trained using images with 70\% of patches masked. We then use masked images from the validation set (with various masking ratios) at sampling time, see Figures~\ref{fig:qualitiative_vitl7_100percent},~\ref{fig:qualitiative_vitl7_30percent}, and~\ref{fig:qualitiative_vitl7_10percent}.

Visualizations show that MSN  discards instance-specific information such as background, pose, and lighting, while retaining semantic information about the images, even when a large fraction of the patches are masked.

\begin{figure}[t]
    \begin{subfigure}{\linewidth}
        \centering
        \includegraphics[width=\linewidth]{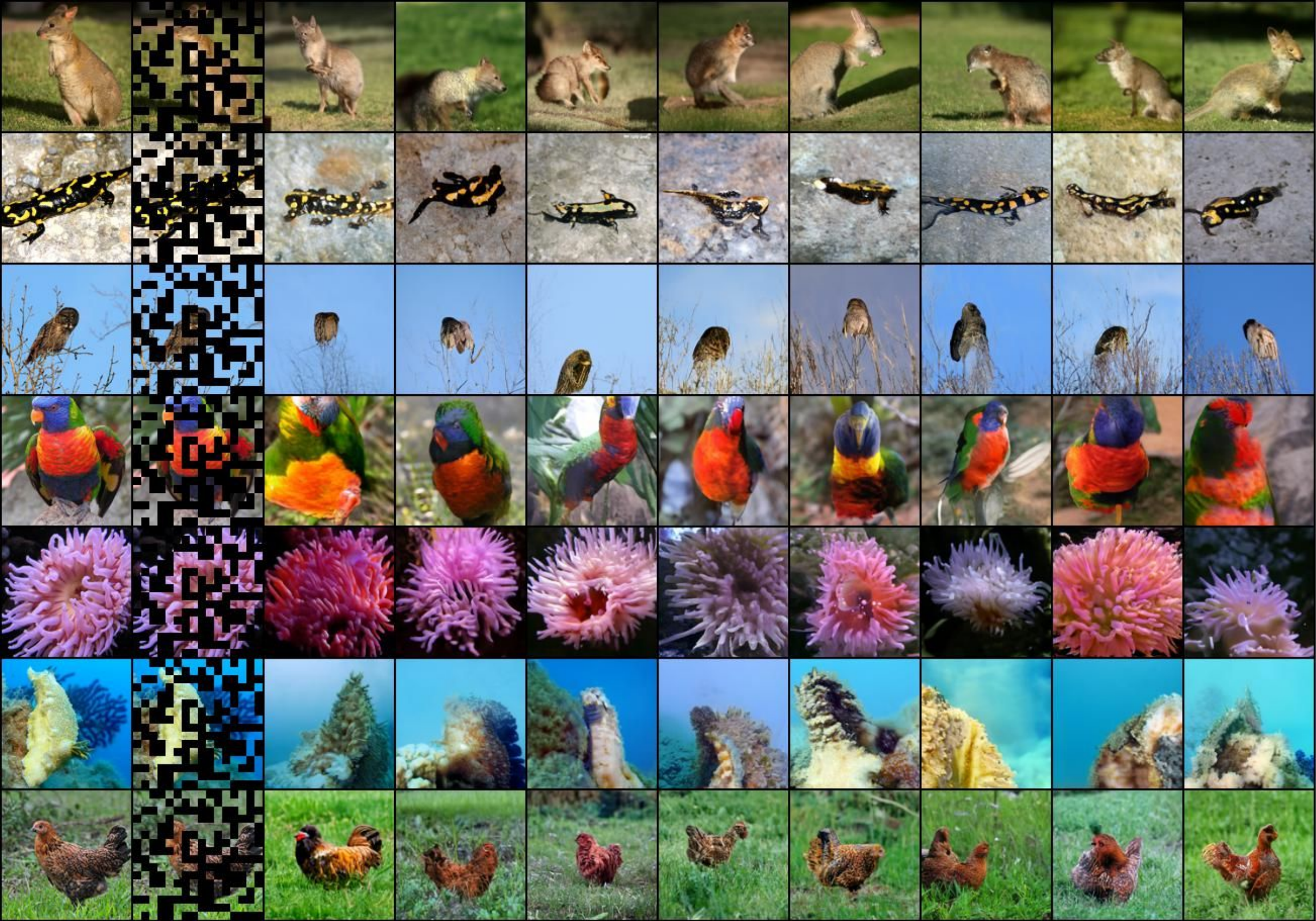}
        \caption{MSN Representations visualized on ImageNet validation set.}
        \label{fig:qualitative_msn}
    \end{subfigure}\vspace{1ex}
    \begin{subfigure}{\linewidth}
        \centering
        \includegraphics[width=\linewidth]{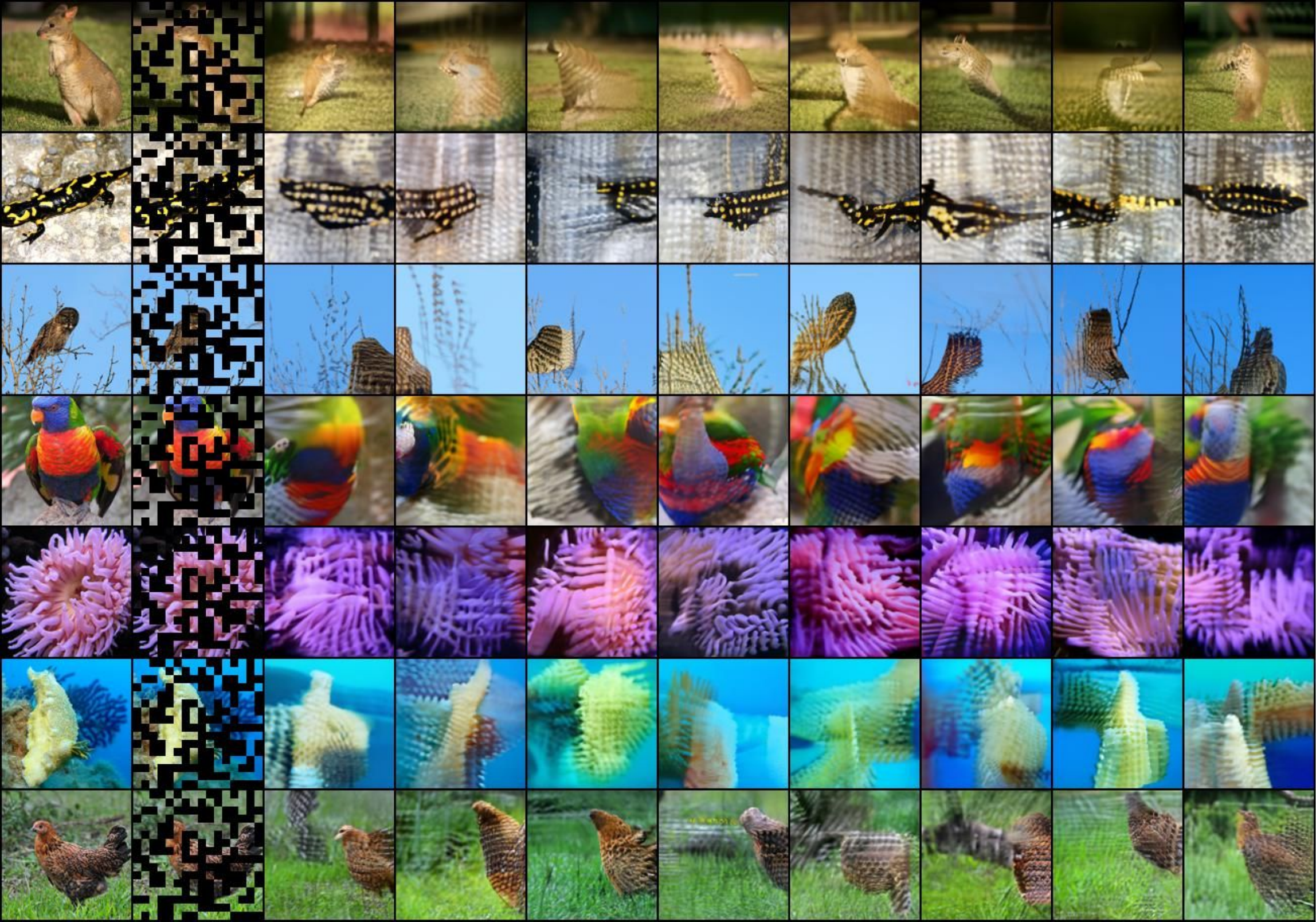}
        \caption{DINO Representations visualized on ImageNet validation set.}
        \label{fig:qualitative_dino}
    \end{subfigure}
    \caption{{\bf Visualizations of ViT-B/8 pre-trained representations computed from images with 50\% of patches masked.} First column: original image. Second column: image with 50\% of patches masked used to compute representations of an SSL pre-trained ViT-B/8 encoder. Other columns: RCDM sampling from generative model conditioned on SSL representation of masked image.}
    \label{fig:qualitiative_50percent}
\end{figure}
\begin{figure}[t]
    \begin{subfigure}{\linewidth}
        \centering
        \includegraphics[width=\linewidth]{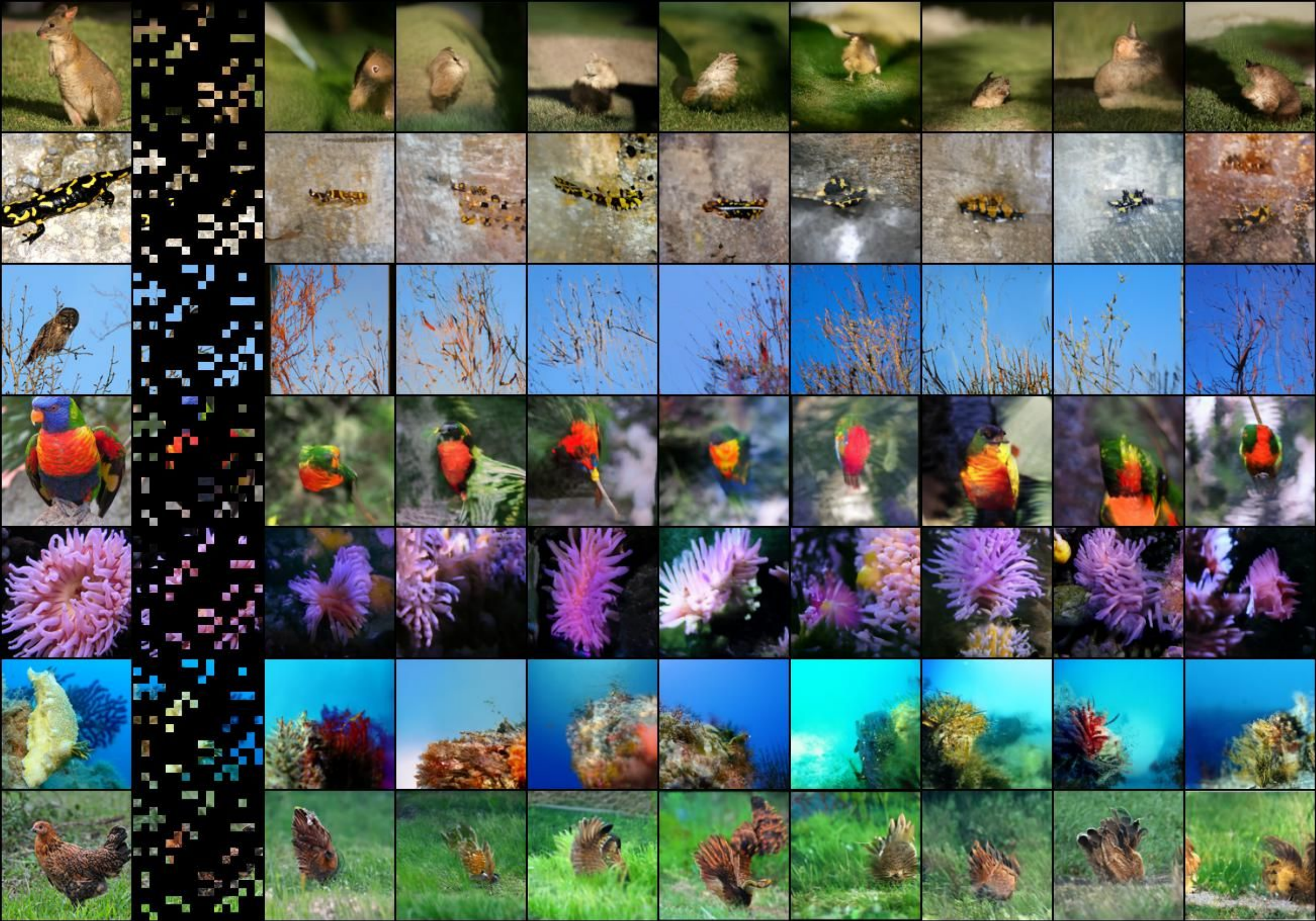}
        \caption{MSN representations visualized on ImageNet validation set.}
        \label{fig:qualitative_msn_high_mask}
    \end{subfigure}\vspace{1ex}
    \begin{subfigure}{\linewidth}
        \centering
        \includegraphics[width=\linewidth]{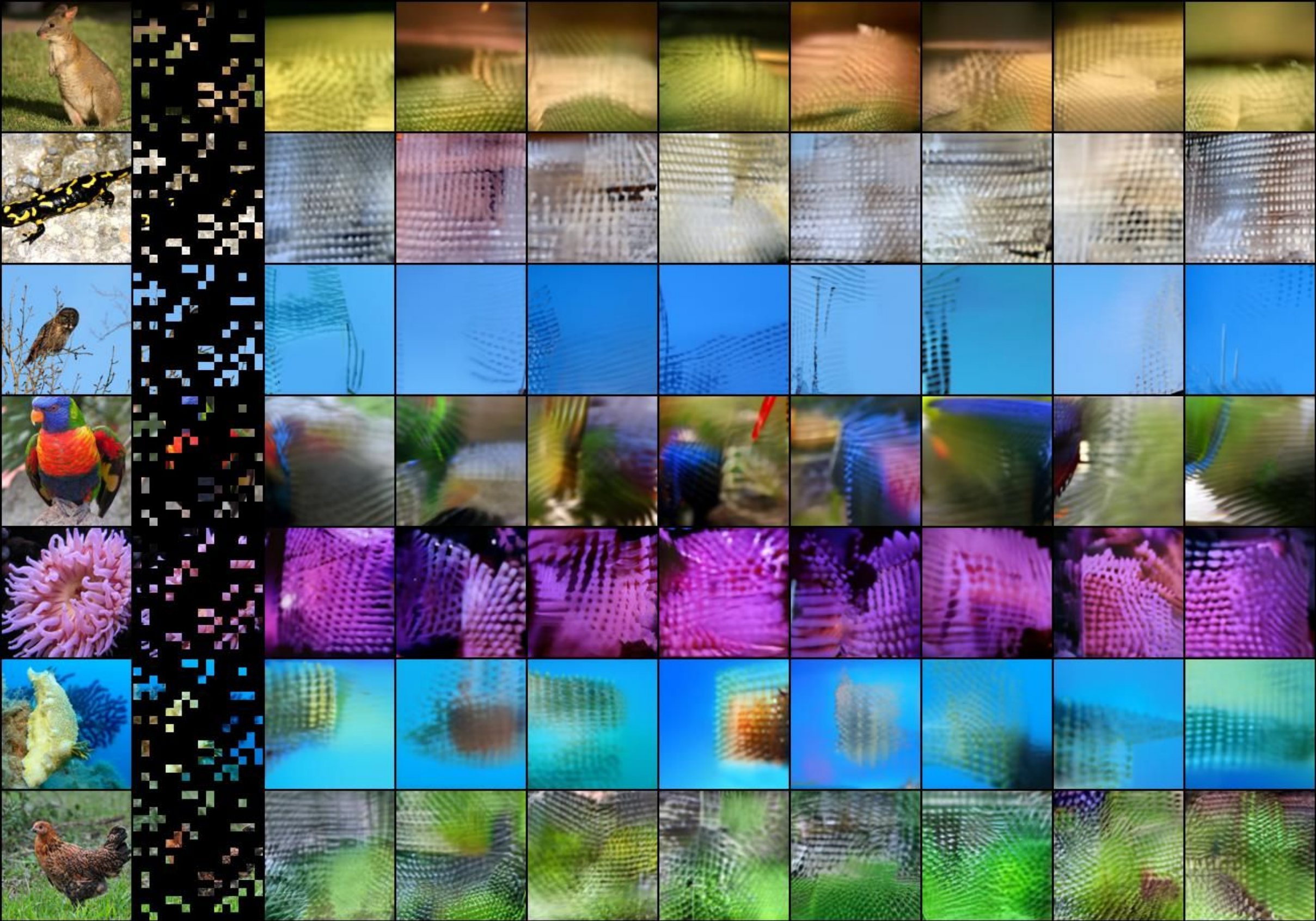}
        \caption{DINO representations visualized on ImageNet validation set.}
        \label{fig:qualitative_dino_high_mask}
    \end{subfigure}
    \caption{{\bf Visualizations of ViT-B/8 pre-trained representations computed from images with 80\% of patches masked.} First column: original image. Second column: image with 80\% of patches masked used to compute representations of an SSL pre-trained ViT-B/8 encoder. Other columns: RCDM sampling from generative model conditioned on SSL representation of masked image.}
    \label{fig:qualitiative_80percent}
\end{figure}

\begin{figure}[t]
    \centering
    \includegraphics[width=\linewidth]{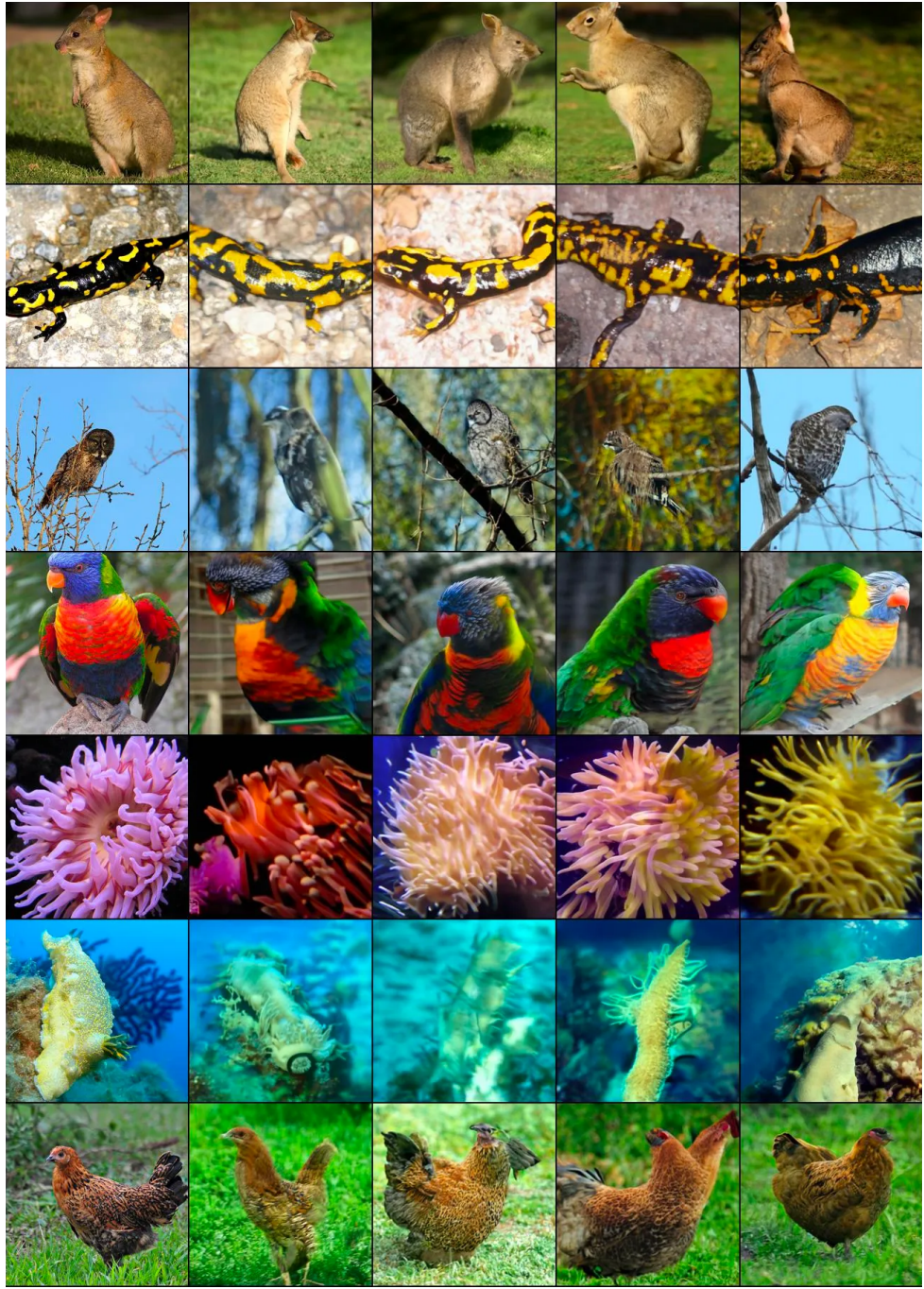}
    \caption{{\bf Visualizations of MSN pre-trained ViT-L/7 representations computed from unmasked images.} First column: original image. Other columns: RCDM sampling from generative model conditioned on MSN representation using a ViT-L/7 encoder. MSN representations are computed from unmasked images. Qualities that vary across samples represent information that the representation is invariant to; e.g., in this case, MSN discards background, pose, and lighting information. Qualities that are common across samples represent information contained in the pre-trained representation.}
    \label{fig:qualitiative_vitl7_100percent}
\end{figure}

\begin{figure}[t]
    \centering
    \includegraphics[width=\linewidth]{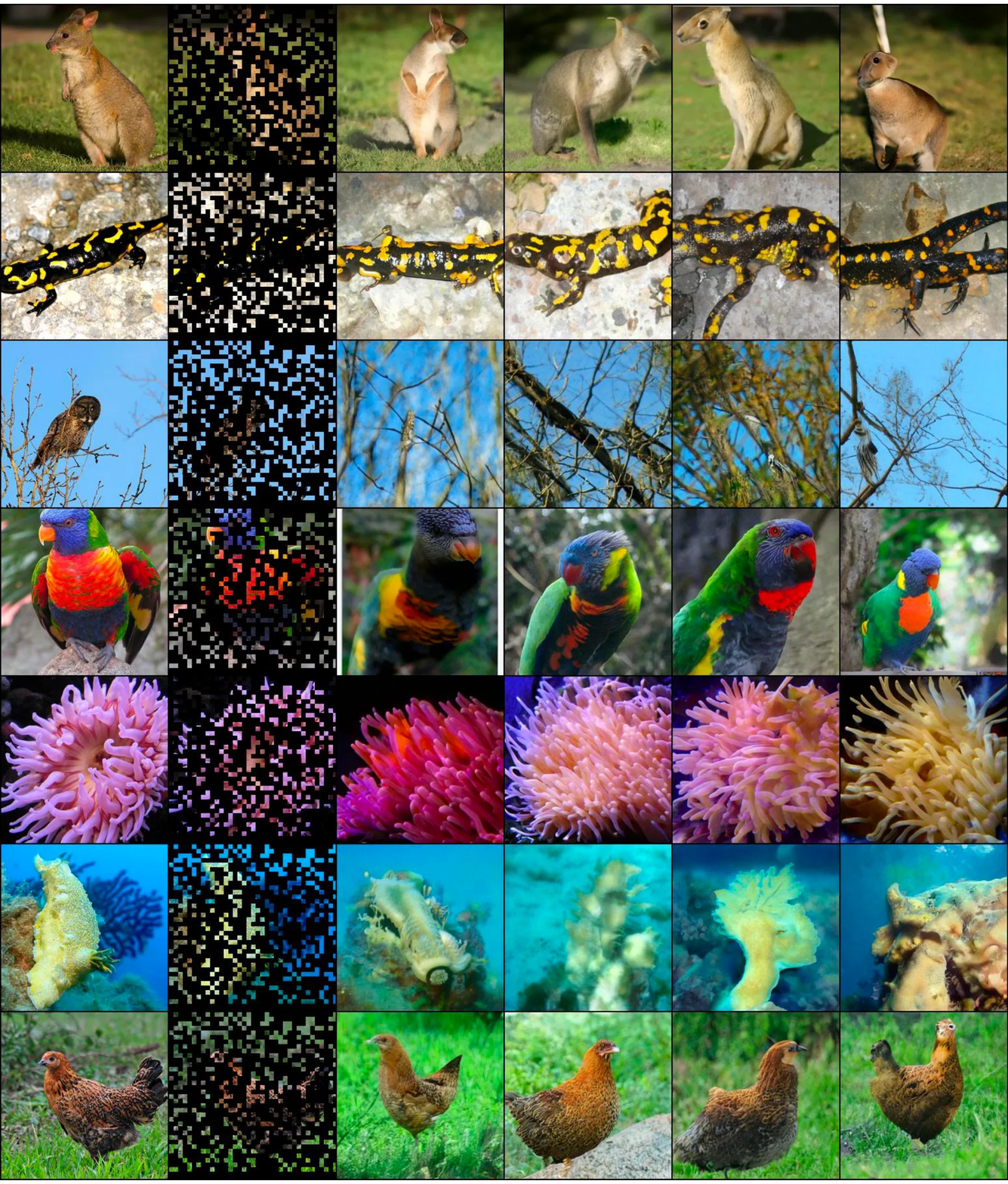}
    \caption{{\bf Visualizations of MSN pre-trained ViT-L/7 representations computed from images with 70\% of patches masked.} First column: original image. Second column: image with 70\% of patches masked used to compute representations of an SSL pre-trained ViT-L/7 encoder. Other columns: RCDM sampling from generative model conditioned on SSL representation of masked image. Qualities that vary across samples represent information that the representation is invariant to; e.g., in this case, MSN discards background, pose, and lighting information. Qualities that are common across samples represent information contained in the pre-trained representation.}
    \label{fig:qualitiative_vitl7_30percent}
\end{figure}

\begin{figure}[t]
    \centering
    \includegraphics[width=\linewidth]{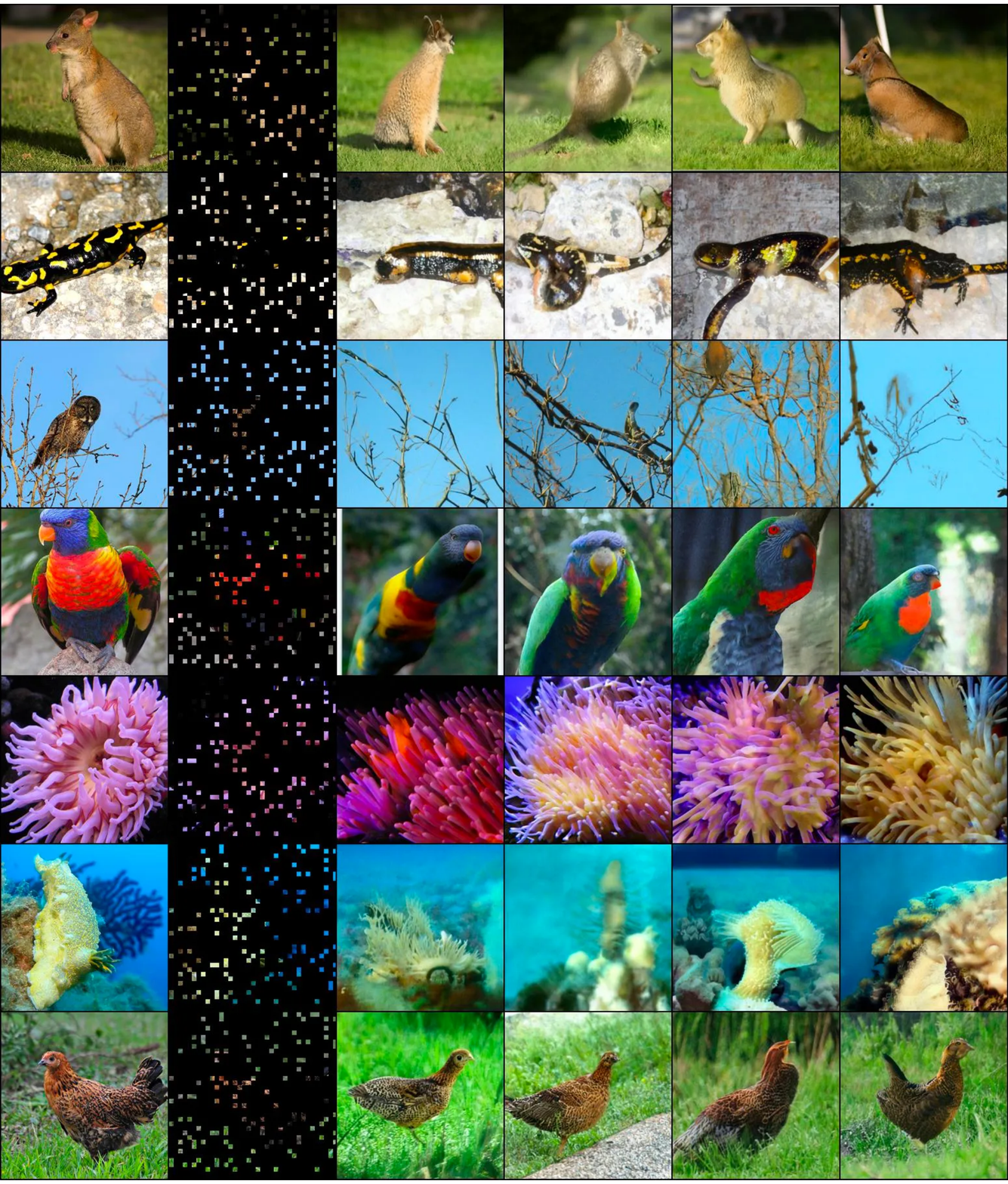}
    \caption{{\bf Visualizations of MSN pre-trained ViT-L/7 representations computed from images with 90\% of patches masked.} First column: original image. Second column: image with 90\% of patches masked used to compute representations of an SSL pre-trained ViT-L/7 encoder. Other columns: RCDM sampling from generative model conditioned on SSL representation of masked image. Qualities that vary across samples represent information that the representation is invariant to; e.g., in this case, MSN discards background, pose, and lighting information. Qualities that are common across samples represent information contained in the pre-trained representation. Even with high-masking ratio, MSN retains semantic information about the images.}
    \label{fig:qualitiative_vitl7_10percent}
\end{figure}

\end{document}